\newtheorem{theorem}{Theorem}
\newtheorem{lemma}{Lemma}
\newcommand{\dataset}{{\cal D}}
\newcommand{\calX}{\mathcal{X}}
\newcommand{\calA}{\mathcal{A}}
\DeclareMathOperator*{\argmin}{arg\,min}
\begin{document}

\title{Boosting algorithms for uplift modeling\thanks{This work was supported by Research Grant no.~N~N516~414938 of the Polish Ministry of Science and Higher Education (Ministerstwo Nauki i Szkolnictwa Wy\.zszego) from research funds for the period 2010--2014. M.S. was also supported by the European Union from resources of the European Social Fund: Project POKL `Information technologies: Research and their interdisciplinary applications', Agreement UDA-POKL.04.01.01-00-051/10-00.}}


\author{Micha{\l} So{\l}tys\thanks{Institute of Computer Science, Polish Academy of Sciences, Warsaw, Poland, E-mail: {\tt s.jaroszewicz@ipipan.waw.pl}} \and Szymon Jaroszewicz\footnotemark[2]}


\date{}

\maketitle

\begin{abstract}
Uplift modeling is an area of machine learning which aims at
predicting the causal effect of some action on a given individual. The
action may be a medical procedure, marketing campaign, or any other
circumstance controlled by the experimenter.  Building an uplift model
requires two training sets: the treatment group, where individuals
have been subject to the action, and the control group, where no
action has been performed. An uplift model allows then to assess the
gain resulting from taking the action on a given individual, such as
the increase in probability of patient recovery or of a product being
purchased.  This paper describes an adaptation of the well-known
boosting techniques to the uplift modeling case. We formulate three
desirable properties which an uplift boosting algorithm should
have. Since all three properties cannot be satisfied
simultaneously, we propose three uplift boosting algorithms, each
satisfying two of them. Experiments demonstrate the usefulness of the
proposed methods, which often dramatically improve performance of the
base models and are thus new and powerful tools for uplift modeling.
\end{abstract}

\section{Introduction}\label{sec:intro}

Machine learning is primarily concerned with the problem of
classification, where the task is to predict, based on a number of
predictor attributes, the class to which an instance belongs.
Unfortunately, classification is not well suited to many problems in
marketing or medicine to which it is frequently applied.  Consider a
direct marketing campaign where potential customers receive a mailing
offer.  A classifier is typically built based on a small pilot
campaign and used to select the customers who should be targeted.  As
a result, the customers most likely to buy {\em after} the campaign
will be selected as targets.  Unfortunately this is not what a
marketer wants.  Some of the customers would have bought regardless of
the campaign, targeting them resulted in unnecessary costs.  Other
customers were actually going to make a purchase but were annoyed by
the campaign.  While, at first sight, such a case may seem unlikely, it
is a well known phenomenon in the marketing
literature~\cite{Hansotia2002,RadcliffeTechRep}; the result is a loss
of a sale or even churn.

We should therefore select customers who will buy {\em because} of the
campaign, that is, those who are likely to buy if targeted, but
unlikely to buy otherwise.  Similar problems arise in medicine where
some patients may recover without treatment and some may be hurt by
treatment's side effects more than by the disease itself.

{\em Uplift modeling} provides a solution to this problem. The
approach uses two separate training sets: {\em treatment} and {\em
  control}.  Individuals in the treatment group have been subjected to
the action, those in the control group have not.  Instead of modeling
class probabilities, uplift modeling attempts to model the {\em
  difference} between conditional class probabilities in the treatment
and control groups.  This way, the causal influence of the
action can be modeled, and the method is able to predict the true gain
(with respect to taking no action) from targeting a given individual.

This paper presents an adaptation of boosting to the uplift modeling
case.  Boosting often dramatically improves performance of
classification models, and in this paper we demonstrate that it can
bring similar benefits to uplift modeling.  We begin by stating three
desirable properties of an uplift boosting algorithm.  Since all three
cannot be satisfied at the same time, we propose three uplift boosting
algorithms, each satisfying two of them.  Experimental verification
proves that the benefits of boosting extend to the case of uplift
modeling and shows relative merits of the three algorithms.

We conclude by mentioning a problem which is the biggest challenge of
uplift modeling as opposed to standard classification.  The problem
has been known in statistical literature~\cite{Holland86} as the

\begin{quote}
  {\bf Fundamental Problem of Causal Inference.}
  For every individual, only one of the outcomes is observed, after the
  individual has been subject to an action (treated) or when the
  individual has not been subject to the action (was a control case),
  {\em never} both.
\end{quote}

As a result, we never know whether the action performed on a given
individual was truly beneficial.  This is different from
classification, where the true class of each individual in the
training set is known.

In the remaining part of this section we describe the notation, give
an overview of the related work and review some of the properties of
classification boosting.

\subsection{Notation and assumptions}

We will now introduce the notation used throughout the paper. We use
the superscript ${}^T$ for quantities related to the treatment group
and the superscript ${}^C$ for quantities related to the control
group. For example, the treatment training dataset will be denoted
with $\dataset^T$ and the control training dataset with $\dataset^C$.
Both datasets together constitute the whole training dataset,
$\dataset = \dataset^T \cup \dataset^C$.

Each data record $(x,y)$ consists of a vector of features $x\in\calX$
and a class $y\in\{0,1\}$ with $1$ assumed to be the successful
outcome, for example patient recovery or a positive response to a
marketing campaign.  Let $N^T$ and $N^C$ denote the number of records
in the treatment and control datasets.

An uplift model is a function $h:\calX\rightarrow\{0,1\}$.  The value
$h(x)=1$ means the action is deemed beneficial for $x$ by the model,
$h(x)=0$ means that its impact is considered neutral or negative.  By
`positive outcome' we mean that the probability of success for a given
individual $x$ is higher if the action is performed on her than if the
action is not taken.

We will denote general probabilities related to the treatment and
control groups with $P^T$ and $P^C$, respectively.  For example,
$P^T(y=1,h=1)$ stands for probability that a randomly selected case in
the treatment set has a positive outcome and taking the action on it
is predicted to be beneficial by an uplift model $h$.  We can now
state more formally when an individual $x$ should be subject to an
action, namely, when $P^T(y=1|x)-P^C(y=1|x)>0$.

\subsection{Related work}\label{sec:rel-work}

Despite its practical appeal, uplift modeling has seen relatively
little attention in the literature.  A trivial approach is to build
two probabilistic classifiers, one on the treatment set, the other on
control, and subtract their predicted probabilities.  This approach
can, however, suffer from a serious drawback: both models may focus on
predicting class probabilities themselves, instead of focusing on the,
usually much smaller, differences between the two groups.  An
illustrative example can be found in~\cite{RadcliffeTechRep}.

Several algorithms have thus been proposed which directly model the
difference between class probabilities in the treatment and control
groups. Many of them are based on modified decision trees.  For
example,~\cite{RadcliffeTechRep} describe an uplift tree learning
algorithm which selects splits based on a statistical test of
differences between treatment and control class probabilities.
In~\cite{my:uplift-trees,my:uplift-trees-KAIS} uplift decision trees
based on information theoretical split criteria have been proposed.
Since we use those trees as base models, we will briefly discuss the
splitting criterion they use.

The splitting criterion for uplift decision trees compares some
measure of divergence between treatment and control class
probabilities before and after the split.  The difference between two
probability distributions $P=(p_1,\ldots,p_n)$ and
$Q=(q_1,\ldots,q_n)$ is typically measured using the Kullback-Leibler
divergence~\cite{csiszar}, however the authors found that
E-divergence $$E(P:Q) = \sum_i (p_i - q_i)^2,$$ which is simply the
squared Euclidean distance, performed better in the experiments.  The
test selection criterion (the E-divergence gain) then becomes
\[
E_{gain}(A) = \sum_{a\in\calA}P(a)E\left(P^T(Y|a):P^C(Y|a)\right) - E\left(P^T(Y):P^C(Y)\right),
\]
where $\calA$ is the set of possible outcomes of the test $A$ and
$P(a)$ is a weighted average of probabilities of outcome $a$ in the
treatment and control training sets.  The value is additionally
divided by a penalizing factor which discourages tests with a large
number of outcomes or tests which lead to very different splits in the
treatment and control training sets.  Details can be found
in~\cite{my:uplift-trees,my:uplift-trees-KAIS}.

Some work has also been published on using ensemble methods for uplift
modeling, although, to the best of our knowledge, none of them on
boosting.  Bagging of uplift models has been mentioned
in~\cite{RadcliffeTechRep}.  Uplift Random Forests have been proposed
by~\cite{GuelmanRandForest}; an extension, called causal conditional
inference trees was proposed by the same authors in~\cite{Guelman2}.
A thorough experimental and theoretical analysis of bagging and random
forests in uplift modeling can be found in~\cite{my:uplEnsembles}
where it is argued that ensemble methods are especially well suited to
this task and that bagging performs surprisingly well.

Other uplift techniques have also been proposed.  Regression based
approaches can be found in~\cite{Lo2002} or, in a medical context,
in~\cite{Robins2004,Goetghebeur2003}.
In~\cite{my:uplift-clinical-ml} a class variable transformation has
been proposed, which allows for applying arbitrary classifiers to
uplift modeling, however the performance of the method was not
satisfactory as it was often outperformed by the double model
approach.  \cite{my:upl-surv} propose a method for converting survival
data such that uplift modeling can, under certain assumptions, be
directly applied to it; we will use this method to prepare
experimental datasets in Section~\ref{sec:exper}.

Some variations on the uplift modeling theme have also been explored.
\cite{pechyony13} proposed an approach to online advertising which
combines uplift modeling with maximizing the response rate in the
treatment group to increase advertiser's benefits.  We do not
address such problems in this paper.

\subsection{Properties of boosting in the classification case}

While many boosting algorithms are available, in this paper by
`boosting' we mean the discrete AdaBoost algorithm~\cite{freund:97}.

We will now briefly summarize two of the properties of boosting in the
classification case.  The two properties will be important for
adapting boosting to the uplift modeling case.  Full details can be
found for example
in~\cite{freund:97,Schapire:1990,schapire1999improved}.

\begin{description}
\item[Forgetting the last member.] The first important property is
  `forgetting' the last model added to the ensemble.  After a new
  member is added, record weights are updated such that its
  classification error is exactly $1/2$~\cite{schapire1999improved}.
  This makes it likely for the next member to be very different from
  the previous one, leading to a diverse ensemble.
\item[Fastest decrease of ensemble's training error.] It can be shown~\cite{freund:97},
  that training set error of a boosted ensemble is bounded from above
  by
\[
\epsilon\leq 2^M\prod_{m=1}^M\sqrt{\epsilon_m(1-\epsilon_m)},
\]
where $\epsilon_m$ is the weighted error of the $m$-th model.  The
error thus decreases exponentially as long as
$\epsilon_m<1/2$. Moreover, the updates to record weights as well as
the weights of ensemble members are chosen such that this upper bound
is minimized.
\end{description}

\section{Boosting in the context of uplift modeling}\label{sec:uplift_boosting}

In this section we will present a general uplift boosting algorithm,
define an uplift analogue of classification error, and state three
properties which uplift boosting algorithms should have.

\subsection{A general uplift boosting algorithm}\label{sec:general-uplift-boosting}

In the $m$-th iteration of the boosting algorithm the $i$-th treatment
group training record is assumed to have a weight $w_{m,i}^T$ assigned
to it.  Likewise a weight $w_{m,i}^C$ is assigned to the $i$-th
control training case.  Further, denote by
\begin{equation}\label{eq:def-ptpc}
p_m^T = \frac{\sum_{i=1}^{N^T} w_{m,i}^T}{\sum_{i=1}^{N^T} w_{m,i}^T + \sum_{i=1}^{N^C} w_{m,i}^C},
\;\;\;\;\; p_m^C = \frac{\sum_{i=1}^{N^C} w_{m,i}^C}{\sum_{i=1}^{N^T} w_{m,i}^T + \sum_{i=1}^{N^C} w_{m,i}^C}
\end{equation}
the relative sizes of treatment and control datasets at iteration $m$.
Notice that $p_m^T + p_m^C = 1$ for every $m$.

Algorithm~\ref{alg:upl-boost} presents a general boosting algorithm
for uplift modeling.
Overall it is similar to boosting for classification but we allow for
different weights of records in the treatment and control groups.  As
a result, model weights $\beta_m$ need not be identical to weight
rescaling factors $\beta_m^T$, $\beta^C_m$.

\begin{algorithm}
\begin{itemize}[noitemsep,nolistsep]
\item[\textbf{Input:}] set of treatment training records, $\dataset^T=\{ \left(x_1^T,y_1^T\right), \dots, \left(x_{N^T}^T,y_{N^T}^T\right) \}$,
\item[] set of control training records, $\dataset^C = \{ \left(x_1^C,y_1^C\right), \dots, \left(x_{N^C}^C,y_{N^C}^C\right) \}$,
\item[] base uplift algorithm to be boosted,
\item[] integer $M$ specifying the number of iterations
\end{itemize}
  \begin{enumerate}
    \item Initialize weights $w_{1,i}^T,w_{1,i}^C$\label{step:init}
    \item For $m\leftarrow 1,\dots,M$
      \begin{enumerate}
        \item $w^T_{m,i}\leftarrow\frac{w^T_{m,i}}{\sum_j w^T_{m,j} + \sum_j w^C_{m,j}}$; $w^C_{m,i}\leftarrow\frac{w^C_{m,i}}{\sum_j w^T_{m,j} + \sum_j w^C_{m,j}}$\label{step:norm}
        \item Build a base model $h_m$ on $\dataset$ with $w_{m,i}^T,w_{m,i}^C$\label{step:build-mod}
        \item Compute the treatment and control errors $\epsilon_m^T$, $\epsilon_m^C$
        \item Compute $\beta_m^T(\epsilon_m^T, \epsilon_m^C)$, $\beta_m^C(\epsilon_m^T, \epsilon_m^C)$
        \item If $\beta_m^T=\beta_m^C=1$ or $\epsilon_m^T \notin (0, \frac{1}{2})$ or $\epsilon_m^C \notin (0, \frac{1}{2})$:\label{step:restart-weights}
          \begin{enumerate}
          \item choose random weights $w_{m,i}^T$, $w_{m,i}^C$
          \item continue with next boosting iteration 
          \end{enumerate}
        \item $w^T_{m+1,i}\leftarrow w_{m,i}^T\cdot(\beta_m^T)^{1[h_m(x_i^T)= y^T_i]}$\label{step:upd-t}
        \item $w^C_{m+1,i}\leftarrow w_{m,i}^C\cdot(\beta_m^C)^{1[h_m(x_i^C)= 1-y^C_i]}$\label{step:upd-c}
        \item $\beta_m\leftarrow\min\{\beta_m^T, \beta_m^C\}$\label{step:model-w}
        \item Add $h_m$ with coefficient $\beta_m$ to the ensemble
      \end{enumerate}
  \end{enumerate}
\textbf{Output:} The final hypothesis
\begin{equation}\label{eq:final_hypothesis}
h_f(x) =\left\{
\begin{array}{ll}
1 & if \; \sum_{m=1}^M \left( \log\frac{1}{\beta_m} \right) h_m(x) \geq \frac{1}{2}\sum_{m=1}^M \log \frac{1}{\beta_m}, \\
0 & otherwise.
\end{array}
\right.
\end{equation}
\caption{A general uplift boosting algorithm.}\label{alg:upl-boost}
\end{algorithm}

The general algorithm leaves many aspects unspecified, such as how
record weights are initialized and updated.  In fact, several variants
of the algorithm are possible.  The choice of model weights in
step~\ref{step:model-w} is explained at the end of
Section~\ref{sec:error_bounds}.  In order to pick concrete values of
the weight update coefficients we are going to state three desirable
properties of an uplift booster and use those properties to derive
three different algorithms.  The parameters used by those algorithms
are summarized in Table~\ref{tab:boost-coeffs}.  The errors
$\epsilon^T_m$ and $\epsilon^C_m$ are defined in
Section~\ref{sec:uplift-err}.

\begin{table}[t]
  \caption{Coefficient values for the three proposed uplift boosting algorithms.}
  \label{tab:boost-coeffs}
  \begin{center}
  \begin{tabular}{|c|c|c|c|}
    \hline
    Algorithm                    & $\beta_m^T$                                         & $\beta_m^C$                        & weight initialization                                                                     \\\hline\hline
    uplift AdaBoost            & \multicolumn{2}{|c|}{$\frac{p^T_m\epsilon^T_m+p^C_m\epsilon^C_m}{1-p^T_m\epsilon^T_m-p^C_m\epsilon^C_m}$} & $w^T_{1,i},w^C_{1,i}\leftarrow 1$ \\\hline
    balanced uplift boosting & \multicolumn{2}{|c|}{See Theorem~\ref{thm:bub_solution}}  & $w^T_{1,i}\leftarrow\frac{1}{N^T}$; $w^C_{1,i}\leftarrow\frac{1}{N^C}$ \\\hline
    balanced forgeting boosting  & $\frac{\epsilon^C_m}{1-\epsilon^T_m}$   & $\frac{\epsilon^T_m}{1-\epsilon^C_m}$       & $w^T_{1,i}\leftarrow\frac{1}{N^T}$; $w^C_{1,i}\leftarrow\frac{1}{N^C}$ \\\hline
  \end{tabular}
  \end{center}
\end{table}

Note that the algorithm is a discrete boosting
algorithm~\cite{freund:97,schapire1999improved}, that is, the base
learners are assumed to return a discrete decision on whether the
action should be taken (1) or not (0).  Algorithm~\ref{alg:upl-boost},
as presented in the figure, also returns a decision.  However, it can
also return a numerical score,
\[
s(x) = \sum_{m=1}^M \left( \log\frac{1}{\beta_m} \right) h_m(x),
\]
indicating how likely it is that the effect of the action is positive
on a given case.  In the experimental Section~\ref{sec:exper} we will
use this variant of the algorithm.

AdaBoost can suffer from premature stops when the sum of weights of
misclassified cases becomes $0$ or is greater than $1/2$.  This
problem turns to be even more troublesome in the uplift modeling case.
Hence, in step~\ref{step:restart-weights} of
Algorithm~\ref{alg:upl-boost} we restart the algorithm by assigning
random weights drawn from the exponential distribution to records in
both training datasets.  The technique has been suggested for
classification boosting in~\cite{Webb2000}.

Before we proceed to describe the three desirable properties of uplift
boosting algorithms we need to define an uplift analogue of
classification error.

\subsection{An uplift analogue of classification error}\label{sec:uplift-err}

Due to the Fundamental Problem of Causal Inference we cannot tell
whether an uplift model correctly classified a given instance.  We
will, however, define an approximate notion of classification error in
the uplift case.  A record $(x_i^T,y_i^T)$ is assumed to be classified
correctly by an uplift model $h$ if $h(x_i^T)=y_i^T$ and
$(x_i^T,y_i^T)\in\dataset^T$; a record $(x_i^C,y_i^C)$ is assumed to be
classified correctly if $h(x_i^C)=1-y_i^C$ and
$(x_i^C,y_i^C)\in\dataset^C$.

Intuitively, if a record $(x_i^T,y_i^T)$ belongs to the treatment
group and a model $h$ predicts that it should receive the treatment
($h(x_i^T)=1$) then the outcome should be positive ($y_i^T=1$) if the
recommendation is to be correct.  Note that the gain from the action
might also be neutral if a success would have occurred also without
treatment, but at least the model's recommendation is not in
contradiction with the observed outcome.  If, on the contrary, the
outcome for a record in the treatment group is $0$ and $h(x_i^T)=1$,
the prediction is clearly wrong as the true effect of the action can
at best be neutral.

In the control group the situation is reversed.  If the outcome was
positive ($y_i^C=1$) but the model predicted that the treatment should
be applied ($h(x_i^C)=1$), the prediction is clearly wrong, since the
treatment cannot be truly beneficial, it can at best be neutral.  To
simplify notation we will introduce the following indicators:
\begin{align}
  e^T(x_i^T) & = 
  \begin{cases}
    0 & \mbox{if }x_i^T\in\dataset^T\mbox{ and }h(x_i^T)=y_i^T,\cr
    1 & \mbox{if }x_i^T\in\dataset^T\mbox{ and }h(x_i^T)\neq y_i^T,
  \end{cases}\label{eq:upl-err1}\\
  e^C(x_i^C) & = 
  \begin{cases}
    0 & \mbox{if }x_i^C\in\dataset^C\mbox{ and }h(x_i^C)\neq y_i^C,\cr
    1 & \mbox{if }x_i^C\in\dataset^C\mbox{ and }h(x_i^C)=y_i^C.\label{eq:upl-err2}
  \end{cases}
\end{align}
An index $m$ will be added to indicate the $m$-th iteration of the
algorithm.  Let us now define uplift analogues of classification error
on the treatment and control datasets and a combined error:
\begin{equation}
  \epsilon_m^T=\frac{\sum_{i:\;e_m^T(x_i)=1}w^T_{m,i}}{\sum_{i=1}^{N^T}w^T_{m,i}},\quad
  \epsilon_m^C=\frac{\sum_{i:\;e_m^C(x_i)=1}w^C_{m,i}}{\sum_{i=1}^{N^C}w^C_{m,i}},\quad
  \epsilon_m= p_m^T\epsilon_m^T + p_m^C\epsilon_m^C.\label{eq:err-upl}
\end{equation}
The sums above are a shorthand notation for summing over misclassified
instances in the treatment and control training sets, which will also
be used later in the paper.

In~\cite{my:uplift-clinical-ml} a class variable transformation was
introduced which replaces class values $y_i^C$ in the control group
with their reverses $1-y_i^C$ while keeping the treatment set class
values unchanged.  It is easy to see that the errors defined in
Equation~\ref{eq:err-upl} are equivalent to standard classification
errors for the transformed class.  According
to~\cite{my:uplift-clinical-ml} the method was not very successful in
regression settings, but as we will show later in this paper, it can
play a useful role in developing uplift boosting algorithms.

We now proceed to introduce the three desirable properties of uplift
boosting algorithms.

\subsection{Balance}

An uplift boosting algorithm is said to satisfy the {\em balance
  condition} if at each\linebreak[4] iteration~$m$
\begin{equation}\label{eq:bub_equal_masses}
\sum_{i=1}^{N^T} w_{m,i}^T = \sum_{i=1}^{N^C} w_{m,i}^C.
\end{equation}
Due to weight normalization in step~\ref{step:norm} the condition can
equivalently be stated as $p_m^T=p^C_m=\frac{1}{2}$ for all $m$'s.  In
other words, the total weights assigned to the treatment and control
groups remain constant and equal.

The reason balance property may be desirable is to prevent a situation
when one of the groups gains much higher weight than the other and, as
a result, weak learners focus only on the treatment or control group
instead of the differences between them.

Let us now inductively rephrase the condition in terms of an equation
on $\beta_m^T$ and $\beta_m^C$.  For $m=1$ the balance condition can
be achieved by proper initialization of weights.  Suppose now that it
is satisfied at iteration $m$.  From Equation (\ref{eq:bub_equal_masses})
and steps~\ref{step:upd-t} and~\ref{step:upd-c} of
Algorithm~\ref{alg:upl-boost} we get, after splitting the sums over
correctly and incorrectly classified cases, that to ensure the
condition holds at iteration $m+1$, the following equation must be true:
\begin{equation*}
\sum_{i:\;e_m^T(x_i)=1} w_{m,i}^T + \beta_m^T\sum_{i:\;e_m^T(x_i)=0} w_{m,i}^T = \sum_{i:\;e_m^C(x_i)=1} w_{m,i}^C + \beta_m^C\sum_{i:\;e_m^C(x_i)=0} w_{m,i}^C.
\end{equation*}
Dividing both sides by $\sum_{i=1}^{N^T} w_{m,i}^T = \sum_{i=1}^{N^C}
w_{m,i}^C$ and using Equation~\ref{eq:err-upl} we get
\begin{equation*}
\epsilon_m^T + \left( 1 - \epsilon_m^T \right) \beta_m^T = \epsilon_m^C + \left( 1 - \epsilon_m^C \right) \beta_m^C,
\end{equation*}
and
\begin{equation}\label{eq:bub_notation_full}
\beta_m^T = \frac{\epsilon_m^C - \epsilon_m^T}{1 - \epsilon_m^T} + \frac{1 - \epsilon_m^C}{1 - \epsilon_m^T}\beta_m^C.
\end{equation}
Equation~\ref{eq:bub_notation_full} holds when $\beta_m^T, \beta_m^C
> 0$ and both errors are not equal to $0$ nor $1$ (there must be good
and bad cases in both datasets).

\subsection{Forgetting the last ensemble member}\label{sec:forgetting}

Let us now examine what the property of forgetting the last model
added to the ensemble means in the context of uplift error defined in
Equation~\ref{eq:err-upl}.  To forget the member $h_m$ added in iteration
$m$ we need to choose weights in iteration $m+1$ such that the combined
error of $h_m$ is exactly one half, $\epsilon_m = \frac{1}{2}$.  Using
a derivation similar to that for the balance condition we see that
$\beta_m^T$ and $\beta_m^C$ need to satisfy the condition
\begin{equation}\label{eq:forget-w}
\beta_m^T\sum_{i:\;e_m^T(x_i)=0} w_{m,i}^T + \beta_m^C\sum_{i:\;e_m^C(x_i)=0} w_{m,i}^C = \sum_{i:\;e_m^T(x_i)=1} w_{m,i}^T + \sum_{i:\;e_m^C(x_i)=1} w_{m,i}^C,
\end{equation}
that is, the total new weights of correctly classified examples need to
be equal to total new weights of incorrectly classified
examples. After dividing both sides by $\sum_{i=1}^{N^T} w_{m,i}^T +
\sum_{i=1}^{N^C} w_{m,i}^C$ the equation becomes
\begin{equation}\label{eq:forget-e}
p^T_m(1-\epsilon_m^T)\beta_m^T + p^C_m(1-\epsilon_m^C)\beta_m^C = p^T_m\epsilon_m^T + p^C_m\epsilon_m^C.
\end{equation}
Note that unlike classical boosting, this condition does not uniquely
determine record weights.  Let us now give a justification of this
condition in terms of performance of an uplift model.

\begin{theorem}\label{thm:forgetting-interp}
Let $h$ be an uplift model.  If the balance condition holds and the
assignment of cases to the treatment and control groups is random then
the condition that the combined uplift error $\epsilon$ be equal to
$\frac{1}{2}$ is equivalent to
\begin{multline}\label{eq:forg-interp}
P(h=1) \left[P^T(y=1|h=1) - P^C(y=1|h=1)\right]\\ + P(h=0)\left[P^C(y=1|h=0)-P^T(y=1|h=0)\right] = 0.
\end{multline}
\end{theorem}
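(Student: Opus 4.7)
The plan is to unwind the definitions of $\epsilon^T_m$ and $\epsilon^C_m$ as probabilities, invoke the balance hypothesis to reduce $\epsilon = 1/2$ to a clean identity between treatment and control quantities, and finally use the random-assignment hypothesis to identify $P^T(h{=}j) = P^C(h{=}j) = P(h{=}j)$ so that the identity collapses into (\ref{eq:forg-interp}).

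First, I would read the errors probabilistically: by (\ref{eq:upl-err1})--(\ref{eq:upl-err2}), $\epsilon_m^T = P^T(h \neq y)$ and $\epsilon_m^C = P^C(h = y)$ (the empirical sums in (\ref{eq:err-upl}) are weighted counts estimating exactly these quantities, and when we read them as true probabilities the statement is a distributional identity). Using the balance condition $p^T_m = p^C_m = 1/2$, the assumption $\epsilon_m = 1/2$ becomes
\begin{equation*}
P^T(h \neq y) + P^C(h = y) = 1.
\end{equation*}

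Next, I would split both terms by the value of $h$:
\begin{align*}
P^T(h\neq y) &= P^T(h{=}1,y{=}0) + P^T(h{=}0,y{=}1),\\
P^C(h = y) &= P^C(h{=}1,y{=}1) + P^C(h{=}0,y{=}0).
\end{align*}
Eliminating the $y{=}0$ cells via $P^\ast(h{=}j,y{=}0) = P^\ast(h{=}j) - P^\ast(h{=}j,y{=}1)$ and regrouping yields
\begin{equation*}
P^T(h{=}1) + P^C(h{=}0) + \bigl[P^C(h{=}1,y{=}1) - P^T(h{=}1,y{=}1)\bigr] + \bigl[P^T(h{=}0,y{=}1) - P^C(h{=}0,y{=}1)\bigr] = 1.
\end{equation*}
The random-assignment hypothesis tells us that the distribution of $x$ (and hence of $h(x)$) is the same in both groups, so $P^T(h{=}1) = P^C(h{=}1) = P(h{=}1)$ and $P^T(h{=}0) = P^C(h{=}0) = P(h{=}0)$; thus $P^T(h{=}1) + P^C(h{=}0) = 1$, and it cancels against the right-hand side.

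What remains is
\begin{equation*}
P^T(h{=}1,y{=}1) - P^C(h{=}1,y{=}1) = P^C(h{=}0,y{=}1) - P^T(h{=}0,y{=}1).
\end{equation*}
Factoring each joint probability as $P^\ast(h{=}j,y{=}1) = P(h{=}j)\,P^\ast(y{=}1 \mid h{=}j)$ (again using that the marginal of $h$ is the same in both groups) and moving all terms to one side gives exactly (\ref{eq:forg-interp}). Each of these steps is reversible, so the two conditions are equivalent. The only subtle point is the use of random assignment to identify the marginals $P^T(h{=}j)$ with $P^C(h{=}j)$ — without it, the $P^T(h{=}1) + P^C(h{=}0) = 1$ cancellation does not occur and the $P(h{=}j)$ factorization in the last step is not available, so I would flag this as the place where that hypothesis is actually used.
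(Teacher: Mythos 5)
Your proof is correct in substance and follows essentially the same route as the paper's: read the errors as $P^T(h\neq y)$ and $P^C(h=y)$, use the balance condition to reduce $\epsilon=\frac{1}{2}$ to $P^T(h\neq y)+P^C(h=y)=1$, split by the value of $h$, invoke random assignment to identify the marginals of $h$ across the two groups, and rearrange (the paper merely passes through conditional probabilities a step earlier than you do). One sign slip to fix: what actually remains after cancelling $P^T(h{=}1)+P^C(h{=}0)=1$ in your (correct) displayed identity is $P^T(h{=}1,y{=}1)-P^C(h{=}1,y{=}1)=P^T(h{=}0,y{=}1)-P^C(h{=}0,y{=}1)$, with both differences taken treatment minus control, not the version you wrote with the right-hand side reversed; it is this form that, after factoring out $P(h{=}j)$ and moving everything to one side (which flips the sign of the $h{=}0$ bracket), yields exactly (\ref{eq:forg-interp}).
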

The proof can be found in Appendix A.
Note that the left term in~(\ref{eq:forg-interp}) is the total gain in
success probability due to the action being taken on cases selected by
the model and the right term is the gain from not taking the action on
cases not selected by the model.  A good uplift model tries to
maximize both quantities, so the sum being equal to zero corresponds
to a model giving no overall gain over the controls.

When the balance condition holds, the forgetting property thus has a
clear interpretation in terms of uplift model performance.  When the balance
condition does not hold, the interpretation is, at least partially,
lost.

\subsection{Training error bound}\label{sec:error_bounds}

We now derive an upper bound on the uplift analogue to the training
error of an ensemble with $M$ members.  It is desirable for an uplift
algorithm to achieve the fastest possible decrease of this upper
bound, or at least to guarantee that the error does not increase as
more models are added.
We begin by defining the analogue of Equation~\ref{eq:err-upl} for
the whole ensemble.  Let $e_f^T(x_i^T)$ and $e_f^C(x_i^C)$ denote the
indicators defined in Equations~\ref{eq:upl-err1}
and~\ref{eq:upl-err2} for the final ensemble $h_f$ defined in
Equation~\ref{eq:final_hypothesis}.  The errors defined in
Equation~\ref{eq:err-upl} for $h_f$ become
\begin{equation}
  \epsilon_f^T=\frac{\sum_{i:\;e_f^T(x_i)=1}w^T_{1,i}}{\sum_{i=1}^{N^T}w^T_{1,i}},\quad
  \epsilon_f^C=\frac{\sum_{i:\;e_f^C(x_i)=1}w^C_{1,i}}{\sum_{i=1}^{N^C}w^C_{1,i}},\quad
  \epsilon_f= p_1^T\epsilon_f^T + p_1^C\epsilon_f^C.\label{eq:err-f}
\end{equation}

Following~\cite{freund:97} in this derivation we ignore the weight
normalization step~\ref{step:norm} of the algorithm since it does not
influence the way errors (and consequently scaling factors) are
calculated.  It only affects step~\ref{step:build-mod}, which has no
effect on the derivations below.

Following the derivation from~\cite{freund:97} separately for the
treatment and control datasets, we obtain
\begin{align*}
\sum_{i=1}^{N^T}& w_{m+1,i}^T + \sum_{i=1}^{N^C} w_{m+1,i}^C \\ 
&\leq\left( \sum_{i=1}^{N^T} w_{m,i}^T \right) (1 - (1 - \epsilon_m^T)(1 - \beta_m^T)) + \left( \sum_{i=1}^{N^C} w_{m,i}^C \right) (1 - (1 - \epsilon_m^C)(1 - \beta_m^C)) \\
&=\sum_{i=1}^{N^T} w_{m,i}^T + \sum_{i=1}^{N^C} w_{m,i}^C - (1 - \epsilon_m^T)(1 - \beta_m^T) \sum_{i=1}^{N^T} w_{m,i}^T - (1 - \epsilon_m^C)(1 - \beta_m^C) \sum_{i=1}^{N^C} w_{m,i}^C\\
&\leq \left(\sum_{i=1}^{N^T} w_{m,i}^T + \sum_{i=1}^{N^C} w_{m,i}^C \right) \left[ 1 - p_m^T (1 - \epsilon_m^T)(1 - \beta_m^T) - p_m^C (1 - \epsilon_m^C)(1 - \beta_m^C) \right],
\end{align*}
where the last inequality uses the notation introduced in Equation~\ref{eq:def-ptpc}.
Applying the inequality $M$ times we get:
\begin{multline}\label{eq:schapire-combinded}
\sum_{i=1}^{N^T} w_{M+1,i}^T + \sum_{i=1}^{N^C} w_{M+1,i}^C\\ \leq \left(\sum_{i=1}^{N^T} w_{1,i}^T + \sum_{i=1}^{N^C} w_{1,i}^C\right)\prod_{m=1}^M \left[ 1 - p_m^T (1 - \epsilon_m^T)(1 - \beta_m^T) - p_m^C (1 - \epsilon_m^C)(1 - \beta_m^C) \right].
\end{multline}
In order to proceed further, we need to make an additional assumption:
\begin{equation*}
\prod_{m=1}^M {\beta_m^T} ^ {1-e_m^T(x_i^T)} \geq \prod_{m=1}^M {\beta_m} ^ {1-e_m^T(x_i^T)}
\quad\mbox{and}\quad
\prod_{m=1}^M {\beta_m^C} ^ {1-e_m^C(x_i^C)} \geq \prod_{m=1}^M {\beta_m} ^ {1-e_m^C(x_i^C)},
\end{equation*}
respectively for all $x_i^T$, $x_i^C$.  Recall that $\beta_m$ is the
weight of $m$-th ensemble member and $\beta_m^T,\beta_m^C$ the scaling
factors for record weights.  Notice that this assumption is implied by
\begin{equation}
\beta_m^T \geq \beta_m, \;\;\;\;\; \beta_m^C \geq\beta_m,\label{cnd:ub-betas}
\end{equation}
which is stronger, but easier to use in practice.

The final boosting ensemble with $M$ members makes a mistake on $x_i^T
\in \dataset^T$, the $i$-th training instance of the treatment dataset
(respectively the $i$-th control instance, $x_i^C \in \dataset^C$) only if~\cite{freund:97}
\begin{equation}
\prod_{m=1}^M \left( \beta_m \right)^{-e_m^T(x_i^T)} \geq \left(
\prod_{m=1}^M \beta_m \right)^{-\frac{1}{2}} \qquad\mbox{and}\qquad
\prod_{m=1}^M \left( \beta_m \right)^{-e_m^C(x_i^C)} \geq \left(
\prod_{m=1}^M \beta_m \right)^{-\frac{1}{2}}.\label{eq:ensemb-err}
\end{equation}
Notice also that the
final weights of an instance $x_i \in \dataset^T$ and $x_i \in \dataset^C$ are,
respectively,
$$w_{M+1,i}^T = w_{1,i}^T \prod_{m=1}^M \left( \beta_m^T
\right)^{1-e_m^T(x_i^T)} \quad\mbox{and}\quad w_{M+1,i}^C = w_{1,i}^C
\prod_{m=1}^M \left( \beta_m^C \right)^{1-e_m^C(x_i^C)},$$ where
$w_{1,i}^T$ and $w_{1,i}^C$ are initial weights set in
step~\ref{step:init} of the algorithm.
Now we can combine the above equations to bound the error $\epsilon_f=p_1^T\epsilon_f^T + p_1^C\epsilon_f^C$:
\begin{align*}
\sum_{i=1}^{N^T} w_{M+1,i}^T& + \sum_{i=1}^{N^C} w_{M+1,i}^C \geq \sum_{i:\; e_f^T(x_i) \neq 0} w_{M+1,i}^T + \sum_{i:\; e_f^C(x_i) \neq 0}w_{M+1,i}^C \\
&=\sum_{i:\; e_f^T(x_i) \neq 0} w_{1,i}^T \prod_{m=1}^M \left( \beta_m^T \right)^{1-e_m^T(x_i)} + \sum_{i:\; e_f^C(x_i) \neq 0}w_{1,i}^C \prod_{m=1}^M \left( \beta_m^C \right)^{1-e_m^C(x_i)} \\
&\geq\sum_{i:\; e_f^T(x_i) \neq 0} w_{1,i}^T \prod_{m=1}^M \left( \beta_m \right)^{1-e_m^T(x_i)} + \sum_{i:\; e_f^C(x_i) \neq 0} w_{1,i}^C \prod_{m=1}^M \left( \beta_m \right)^{1-e_m^C(x_i)} \\
&\geq\left(\prod_{m=1}^M \beta_m \right)^{\frac{1}{2}} \left(\sum_{i:\; e_f^T(x_i) \neq 0} w_{1,i}^T  + \sum_{i:\; e_f^C(x_i) \neq 0} w_{1,i}^C \right)
\end{align*}
The second inequality follows from Equation~\ref{cnd:ub-betas} and
the last one was obtained using Equation~\ref{eq:ensemb-err}.  Combining with
Inequality~\ref{eq:schapire-combinded} we obtain
\begin{equation}\label{eq:ub-err-bound}
\prod_{m=1}^M \frac{ 1 - p_m^T (1 - \epsilon_m^T)(1 - \beta_m^T) - p_m^C (1 - \epsilon_m^C)(1 - \beta_m^C) }{\left( \beta_m \right)^{\frac{1}{2}}} \geq p_{1}^T\epsilon_f^T + p_{1}^C\epsilon_f^C = \epsilon_f,
\end{equation}
an upper bound for resubstitution error of the final hypothesis on
datasets $\dataset^T$, $\dataset^C$.

Note that for fixed $\beta_m^T$ and $\beta_m^C$, the bound is
optimized (subject to the constraint given in
Equation~\ref{cnd:ub-betas}) by choosing
$\beta_m=\min\{\beta_m^T,\beta_m^C\}$.  This is the rationale for
step~\ref{step:model-w} of Algorithm~\ref{alg:upl-boost}.

\section{Uplift boosting algorithms}

In the previous section we introduced three properties which uplift
boosting algorithms should satisfy.  Unfortunately, as will soon
become apparent, all three cannot be satisfied by a single algorithm.
Therefore, in this section we will derive three uplift boosting
algorithms, each satisfying two of the properties.

\subsection{Uplift AdaBoost}\label{sec:upl-adaboost}

We begin with {uplift AdaBoost}, an algorithm obtained by
optimizing the upper bound on the training error given in
Equation~\ref{eq:ub-err-bound}.  The optimization will proceed
iteratively for $m=1,\ldots,M$; in the $m$-th iteration we minimize the
expression
\begin{equation}\label{ex:ub-iter-opt-task}
\frac{ 1 - p_m^T (1 - \epsilon_m^T)(1 - \beta_m^T) - p_m^C (1 - \epsilon_m^C)(1 - \beta_m^C) }{\sqrt{\beta_m}}
\end{equation}
over $\beta_m^T,\beta_m^C$ with $\epsilon_m^T,\epsilon_m^C$ fixed.
Notice that, while minimizing this expression, we need to respect the
constraint given in Equation~\ref{cnd:ub-betas}.  Suppose the
expression is minimized by some $\beta_m^{T*}$, $\beta_m^{C*}$,
$\beta_m^*$, with $\beta_m^{*T} > \beta_m^*$.  However, the numerator
of the fraction above is an increasing function of $\beta_m^T$ and
$\beta_m^C$ for $\beta_m$ fixed.  Therefore we could replace
$\beta_m^{T*}$ with $\beta_m^*$ to obtain a better solution.
Analogous argument holds for $\beta_m^C$.  We can conclude that an optimal
solution must satisfy $\beta_m^{T*} = \beta_m^{C*} = \beta_m^*$,
that is, treatment, control and model weights need to be equal.  Taking
this equality into account, Equation~\ref{ex:ub-iter-opt-task} can be
simplified to
\begin{equation}\label{ex:ub-iter-opt-task-simply}
\frac{1 - \left[ 1 - (p_m^T\epsilon_m^T + p_m^C\epsilon_m^C) \right](1 - \beta_m)}{\sqrt{\beta_m}},
\end{equation}
very similar to the one obtained by \cite{freund:97}.  Taking the
derivative and equating to zero we obtain the optimal $\beta_m$:
\begin{equation}\label{eq:ub-beta-optim}
\beta_m^* = \frac{p_m^T\epsilon_m^T + p_m^C\epsilon_m^C}{1 - \left( p_m^T\epsilon_m^T + p_m^C\epsilon_m^C \right)}.
\end{equation}
This result is identical to classical boosting with the classification
error replaced by its uplift analogue.  In fact, the algorithm is
identical to an application of AdaBoost with class variable
transformation from~\cite{my:uplift-clinical-ml}.  Here we prove that
it is in a certain sense optimal.

At each iteration the upper bound is multiplied by
$$2\sqrt{(p_m^T\epsilon_m^T + p_m^C\epsilon_m^C)(1 - \left(
  p_m^T\epsilon_m^T + p_m^C\epsilon_m^C \right))}.$$  It is easy to see
that the bound cannot increase as more members are added to the
ensemble and does in fact decrease, as long as $p_m^T\epsilon_m^T +
p_m^C\epsilon_m^C\neq\frac{1}{2}$.

Let us now look into the two remaining properties.  Using
$\beta_m^T=\beta_m^C=\beta_m$, Equation~\ref{eq:forget-e} can be
rewritten as
\[
1-(p^T_m\epsilon_m^T +p^C_m\epsilon_m^C) = \frac{p^T_m\epsilon_m^T + p^C_m\epsilon_m^C}{\beta_m}.
\]
Substituting Equation~\ref{eq:ub-beta-optim} proves that the
forgetting property is satisfied by this algorithm.

Note, however, that the balance property need not be
satisfied\footnote{This is the case only if $N^T = N^C$ and
  $\epsilon_m^T=\epsilon_m^C$ for all $m$.}.  There is therefore a
risk that the method may focus on only one of the groups: treatment or
control, reducing the weights of records in the other group to
insignificance.  Moreover, the uplift interpretation of the forgetting
property given in Theorem~\ref{thm:forgetting-interp} is no longer
valid.

\subsection{Balanced uplift boosting}

Uplift AdaBoost presented in Section~\ref{sec:upl-adaboost} may suffer
from imbalance between the treatment and control training sets.  In
this section we will develop the balanced uplift boosting algorithm
which optimizes the error bound given in
Equation~\ref{eq:ub-err-bound} under the constraint that the balance
condition holds.  Let us first define constants $a_m$ and $b_m$ to
write Equation~\ref{eq:bub_notation_full} in a more concise form:
\begin{equation}\label{eq:bub_notation}
\beta_m^T = \underbrace{\frac{\epsilon_m^C - \epsilon_m^T}{1 - \epsilon_m^T}}_{b_m} + \underbrace{\frac{1 - \epsilon_m^C}{1 - \epsilon_m^T}}_{a_m} \beta_m^C = a_m\beta_m^C + b_m.
\end{equation}
Substituting into Equation~\ref{ex:ub-iter-opt-task} we get a new
bound on error decrease in iteration $m$ which takes into account the
balance condition:
\begin{equation*}
f\left(\beta_m^C\right) = \frac{ 1 - (1 - \epsilon_m^C)(1 - \beta_m^C)}{\sqrt{\beta_m}}.
\end{equation*}
Using the assumption $\beta_m \leq \min\{\beta_m^T, \beta_m^C\}$ given
in Equation~\ref{cnd:ub-betas} and the fact that $f$ is a decreasing
function of $\beta_m$ we get that, at optimum, $\beta_m =
\min\{\beta_m^T, \beta_m^C\}$.  The bound becomes
\begin{equation}\label{ex:bub_iter_opt_task}
f\left(\beta_m^C\right) = \frac{ 1 - (1 - \epsilon_m^C)(1 - \beta_m^C)}{\sqrt{\min\{\beta_m^C,a_m\beta_m^C + b_m\}}}.
\end{equation}
Minimizing this bound we obtain the value of $\beta_m^C$; $\beta_m^T$
is then computed from Equation~\ref{eq:bub_notation}.  The optimal
value of $\beta_m^C$ is given by the following theorem.
\begin{theorem}\label{thm:bub_solution}
The bound in Equation~\ref{ex:bub_iter_opt_task} is minimized for
\begin{equation}\label{eq:bub_argmin}
\argmin_{\beta_m^C \in (0,+\infty)} f(\beta_m^C)=\left\{
\begin{array}{ll}
\frac{2\epsilon_m^T-\epsilon_m^C}{1-\epsilon_m^C}& if \; \epsilon_m^C < \epsilon_m^T < \frac{1}{2} \;or\; \frac{1}{2} < \epsilon_m^T < \epsilon_m^C, \\
\frac{\epsilon_m^C}{1-\epsilon_m^C}& if \; \epsilon_m^T < \epsilon_m^C < \frac{1}{2} \;or\; \frac{1}{2} < \epsilon_m^C < \epsilon_m^T, \\
1& otherwise,
\end{array}
\right.
\end{equation}
and the corresponding minimum is
\begin{equation}\label{eq:bub_min}
\min_{\beta_m^C \in (0,+\infty)} f(\beta_m^C)=\left\{
\begin{array}{ll}
2\sqrt{\epsilon_m^T(1-\epsilon_m^T)}& if \; \epsilon_m^C < \epsilon_m^T < \frac{1}{2} \;or\; \frac{1}{2} < \epsilon_m^T < \epsilon_m^C, \\
2\sqrt{\epsilon_m^C(1-\epsilon_m^C)}& if \; \epsilon_m^T < \epsilon_m^C < \frac{1}{2} \;or\; \frac{1}{2} < \epsilon_m^C < \epsilon_m^T, \\
1& otherwise.
\end{array}
\right.
\end{equation}
\end{theorem}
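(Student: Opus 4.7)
The plan is to exploit a structural coincidence that reduces $f$ to two copies of the classical AdaBoost objective, and then do a small case analysis on which copy's unconstrained minimizer lands in its admissible sub-interval.

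The starting observation is that by the definitions of $a_m$ and $b_m$ in Equation~\ref{eq:bub_notation}, one has $a_m + b_m = 1$. Thus the two lines $\beta_m^C \mapsto \beta_m^C$ and $\beta_m^C \mapsto a_m\beta_m^C + b_m$ in the denominator's $\min$ intersect precisely at $\beta_m^C = 1$ (with common value $1$), and the sign of $b_m = (\epsilon_m^C-\epsilon_m^T)/(1-\epsilon_m^T)$ tells us which of the two is smaller on each side of $1$. So the constrained problem splits into two pieces, separated at $\beta_m^C = 1$, and on each piece $f$ is a smooth function of $\beta_m^C$.

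On the piece where $\min\{\beta_m^C, a_m\beta_m^C + b_m\} = \beta_m^C$, the function $f$ is literally the classical AdaBoost form $g(\beta)=[1-(1-\epsilon_m^C)(1-\beta)]/\sqrt{\beta}$. Following the same derivative argument as in \cite{freund:97}, its unconstrained minimum is $2\sqrt{\epsilon_m^C(1-\epsilon_m^C)}$, attained at $\beta_m^C=\epsilon_m^C/(1-\epsilon_m^C)$. On the other piece, where $\beta_m^T = a_m\beta_m^C + b_m$, the key algebraic step is to verify the identity
\[
1-(1-\epsilon_m^C)(1-\beta_m^C) \;=\; 1-(1-\epsilon_m^T)(1-\beta_m^T),
\]
which is an immediate consequence of the balance equation (Equation~\ref{eq:bub_notation_full}). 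Changing variables from $\beta_m^C$ to $\beta_m^T$ therefore turns $f$ into the classical AdaBoost form with error $\epsilon_m^T$; its unconstrained minimum is $2\sqrt{\epsilon_m^T(1-\epsilon_m^T)}$, attained at $\beta_m^T = \epsilon_m^T/(1-\epsilon_m^T)$, which via Equation~\ref{eq:bub_notation} corresponds to $\beta_m^C = (2\epsilon_m^T-\epsilon_m^C)/(1-\epsilon_m^C)$.

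The remaining work is admissibility bookkeeping. Using $a_m+b_m=1$, one checks that $\epsilon_m^C/(1-\epsilon_m^C)$ lies in the piece of its formula iff that piece includes the side of $1$ on which $\epsilon_m^C < 1/2$ places it; likewise $(2\epsilon_m^T-\epsilon_m^C)/(1-\epsilon_m^C)$ lies in the appropriate piece iff $\epsilon_m^T$ is on the correct side of $1/2$. When the interior critical point leaves its piece, convexity (AdaBoost's objective is unimodal) forces the piece's minimum onto the shared boundary $\beta_m^C=1$, where $f=1$. Splitting on the sign of $b_m$ and then on whether each of $\epsilon_m^T,\epsilon_m^C$ is below or above $1/2$ produces exactly the four substantive regions in the theorem; since $2\sqrt{\epsilon(1-\epsilon)}\le 1$, whichever interior minimum is admissible beats the boundary value $1$, and when neither is admissible the minimum is $1$. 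The main obstacle is this case-by-case admissibility check — the algebra itself is routine; the mild subtlety is that the interior minimizer on the $\beta_m^T$-piece has to be transported back to $\beta_m^C$-coordinates before comparing it with $1$.
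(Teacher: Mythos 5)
Your proposal is correct and its skeleton is the same as the paper's: split the domain at the point where $\beta_m^C=\beta_m^T$, minimize on each piece, and run a case analysis on the positions of $\epsilon_m^T,\epsilon_m^C$ relative to $\frac{1}{2}$. Your observation that $a_m+b_m=1$, so that the two lines in the denominator cross at $\beta_m^C=1$, is exactly the content of the paper's Lemma~\ref{lem:relations_bmt_bmc}, just derived geometrically rather than by manipulating the inequality $(1-a_m)\beta_m^C\geq b_m$. Where you genuinely improve on the paper is the treatment of the piece with denominator $\sqrt{a_m\beta_m^C+b_m}$: the identity $1-(1-\epsilon_m^C)(1-\beta_m^C)=1-(1-\epsilon_m^T)(1-\beta_m^T)$, which is indeed an immediate rewriting of Equation~\ref{eq:bub_notation_full}, lets you change variables to $\beta_m^T$ and reuse the single classical computation from \cite{freund:97} with $\epsilon_m^T$ in place of $\epsilon_m^C$. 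The paper instead differentiates $f^{C\geq T}$ directly (the step it labels ``analogously''), which obscures why the answer is symmetric under swapping $T$ and $C$; your route makes that symmetry structural. The admissibility bookkeeping you describe matches the paper's six cases (your ``four substantive regions'' plus the two mixed cases collapsing to the boundary value $1$), and your implicit claim that at most one interior critical point is ever admissible does check out. Two small points you should make explicit: first, unimodality, not convexity, is the property you actually use (the map $\beta\mapsto\epsilon\beta^{-1/2}+(1-\epsilon)\beta^{1/2}$ is not convex on all of $(0,\infty)$, but it does decrease then increase, which suffices); second, when $b_m<0$ the expression $a_m\beta_m^C+b_m$ is negative for small $\beta_m^C$, so $f$ is undefined there --- the paper flags this with the remark that the denominator is positive at the optimum, and your argument covers it only implicitly because the transported minimizer has $\beta_m^T=\epsilon_m^T/(1-\epsilon_m^T)>0$. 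Neither point is a gap, but both deserve a sentence.
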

See Appendix B for the proof.
This result is easy to interpret.  Suppose that $\epsilon_m^T <
\epsilon_m^C < \frac{1}{2}$.  We take the larger of the two training
errors, $\epsilon^C_m$, reweight the control dataset using classical
boosting weight update and reweight the treatment dataset such that
the balance condition is maintained.  If $\epsilon_m^C < \epsilon_m^T
< \frac{1}{2}$ the situation is symmetrical.
The decrease in upper bound (Equation~\ref{eq:bub_min}) is interpreted
as follows: compute the decrease rate of standard AdaBoost separately
based on the treatment and control errors.  Then take the larger
(i.e.~worse) of the two factors.

Note that we are guaranteed that the bound will actually decrease only
when $\epsilon_m^T,\epsilon_m^C$ lie on the same side of
$\frac{1}{2}$, a much stricter condition than for uplift AdaBoost.  In
any case we are, however, guaranteed that the bound will not increase
when another model is added to the ensemble.

Note also, that balanced uplift boosting does not, in general, have
the property of forgetting the last member added to the ensemble.
This is an immediate consequence of the derivations in the next section.

\subsection{Balanced forgetting boosting}\label{sec:FBW}

We now present another uplift boosting algorithm satisfying different
properties, namely the balance condition and forgetting of the last
added member.  We assume that at the $m$-th iteration the balance
condition is satisfied, $p^T_m=p^C_m=\frac{1}{2}$.  We want to
pick factors $\beta^T_m$, $\beta^C_m$ such that in the next iteration
the balance condition continues to hold and the current model is
`forgotten', that is, its error is exactly one half.  Since
$p^T_m=p^C_m=\frac{1}{2}$, Equation~\ref{eq:forget-e} can be rewritten
as
\begin{equation}\label{eq:forgetting}
\beta_m^T = \frac{\epsilon_m^T + \epsilon_m^C}{1-\epsilon_m^T} - \frac{1-\epsilon_m^C}{1-\epsilon_m^T}\beta_m^C,
\end{equation}
which, after adding to Equation~\ref{eq:bub_notation_full}, gives
$\beta_m^T = \frac{\epsilon_m^C}{1-\epsilon_m^T}$. Analogously,
$\beta_m^C = \frac{\epsilon_m^T}{1-\epsilon_m^C}$.  We still need to
choose $\beta_m$.  Since the bound in
Equation~\ref{ex:ub-iter-opt-task} is a decreasing function of
$\beta_m$ we set it to $\min\{\beta_m^T, \beta_m^C\}$, the highest
value satisfying Inequality~\ref{cnd:ub-betas}.  Note that the upper
bound on the error need no longer be optimal; let us compute it
explicitly.

First, we provide a condition for $\beta_m^T < \beta_m^C$ or equivalently
$\frac{\epsilon_m^C}{1-\epsilon_m^T}<\frac{\epsilon_m^T}{1-\epsilon_m^C}$.
After multiplying, the second inequality becomes
$\epsilon_m^C(1-\epsilon_m^C)<\epsilon_m^T(1-\epsilon_m^T)$, which is
true when $\epsilon_m^C$ is further from $\frac{1}{2}$ than $\epsilon_m^T$.
Assume now $\epsilon_m^C<\epsilon_m^T<\frac{1}{2}$, so $\beta_m^T <
\beta_m^C$; the remaining cases are analogous and are thus omitted.
After substitution of the expressions for $\beta_m^T$ and $\beta_m^C$
into Equation~\ref{ex:ub-iter-opt-task} we get
\begin{align*}
\frac{ 1 - p_m^T (1 - \epsilon_m^T)(1 -
  \frac{\epsilon_m^C}{1-\epsilon_m^T}) - p_m^C (1 - \epsilon_m^C)(1 -
  \frac{\epsilon_m^T}{1-\epsilon_m^C})
}{\sqrt{\frac{\epsilon_m^C}{1-\epsilon_m^T}}}
=\sqrt{\frac{1-\epsilon_m^T}{\epsilon_m^C}}(\epsilon_m^T+\epsilon_m^C).
\end{align*}
 Notice that the balanced
forgetting boosting algorithm does not guarantee that the error bound
will decrease at all: with $\epsilon^T$ constant and $\epsilon^C$
tending to zero, the bound becomes infinite.  It is thus possible that
the algorithm will diverge when more members are added to the
ensemble.


Let us now compare the bound with that of the balanced uplift boosting
algorithm.  When $\epsilon_m^C<\epsilon_m^T<\frac{1}{2}$ its error
bound is $2\sqrt{\epsilon_m^T(1-\epsilon_m^T)}$.  From
$(\sqrt{\epsilon_m^T}-\sqrt{\epsilon_m^C})^2\geq 0$ it follows that
$\epsilon_m^T+\epsilon_m^C\geq
2\sqrt{\epsilon_m^T}\sqrt{\epsilon_m^C}$ giving
\[
\sqrt{1-\epsilon_m^T}\frac{\epsilon_m^T+\epsilon_m^C}{\sqrt{\epsilon_m^C}}\geq
\sqrt{1-\epsilon_m^T}\frac{2\sqrt{\epsilon_m^T}\sqrt{\epsilon_m^C}}{\sqrt{\epsilon_m^C}}
=2\sqrt{(1-\epsilon_m^T)\epsilon_m^T}.
\]
The other cases for which $\epsilon_m^T,\epsilon_m^C$ lie on the same
side of $\frac{1}{2}$ are analogous.  In those cases balanced uplift
boosting always gives stronger guarantees on the decrease of the
training set error.  If, however, $\epsilon_m^T,\epsilon_m^C$ lie on
the opposite sides of $\frac{1}{2}$, the comparison can go either way.

\section{Experimental evaluation}\label{sec:exper}

In this section we present an experimental evaluation of the three
proposed algorithms and compare their performance with performance of
the base models and bagging.  We begin by describing the test datasets
we are going to use, then review the approaches to evaluating
uplift models and finally present the experimental results.

\subsection{Benchmark data sets}

Unfortunately, not many datasets involving true control groups obtained
through randomized experiments are publicly available.  We have used
one marketing dataset and a few datasets from controlled medical
trials, see Table~\ref{tab:real-datasets} for a summary.

\begin{table}
  \caption{Datasets from randomized trials used in the paper.}\label{tab:real-datasets}
  \begin{center}
    \begin{tabular}{|l|l|c|c|c|c|}
      \hline
                       &                                 & \multicolumn{3}{c|}{\#records}                    & \#attri- \\\cline{3-5}
      dataset          & source                          & treatment                      & control & total  & butes     \\\hline
      Hillstrom visit  & MineThatData blog               & 21,306                         & 21,306  & 42,612 & 8         \\
                       & \cite{Hillstrom2008}           &                                &         &        &           \\
      burn             & {\tt R}, {\tt KMsurv} package   & 84                             & 70      & 154    & 17        \\
      hodg             & {\tt R}, {\tt KMsurv} package   & 16                             & 27      & 43     & 7         \\
      bladder          & {\tt R}, {\tt survival} package & 38                             & 47      & 85     & 6         \\
      colon death      & {\tt R}, {\tt survival} package & 614                            & 315     & 929    & 14        \\
      colon recurrence & {\tt R}, {\tt survival} package & 614                            & 315     & 929    & 14        \\
      veteran          & {\tt R}, {\tt survival} package & 69                             & 68      & 137    & 9         \\
      \hline
    \end{tabular}
  \end{center}
\end{table}

The marketing dataset comes from Kevin Hillstrom's MineThatData
blog~\cite{Hillstrom2008} and contains results of an e-mail campaign
for an Internet retailer.  The dataset contains information about
64,000 customers who have been randomly split into three groups: the
first received an e-mail campaign advertising men's merchandise, the
second a campaign advertising women's merchandise, and the third was
kept as a control.  Data is available on whether a person visited the
website and/or made a purchase (conversion).  We only use data on
visits since very few conversions actually occurred.  In this paper we
use only the treatment group which received women's merchandise
campaign because this group was, overall, much more responsive.

The six medical datasets, based on real randomized trials, come from
the {\tt survival} and {\tt kmsurv} packages in the {\tt R}
statistical system.  Their descriptions are readily available online
and are thus omitted.  The {\tt colon} dataset involves two possible
treatments and a control.  We have kept only one of the treatments
(levamisole), the other (levamisole+5fu) was beneficial for all
patients and targeting specific groups brought no improvement.  All
those datasets contain survival data and require preprocessing before
uplift boosting can be applied.  \cite{my:upl-surv} proposed a simple
conversion method where some threshold $\theta$ is chosen and cases
for which the {\em observed} (possibly censored) survival time is at
least $\theta$ are considered positive outcomes, the remaining ones
negative.  The authors demonstrate that, under reasonable assumptions,
an uplift model trained on such data will indeed make correct
recommendations even though censoring is ignored.  Here we pick the
threshold $\theta$ to be the median observed survival time in the
dataset.

\subsection{Methodology}

In this section we discuss the evaluation of uplift models which is
more difficult than evaluating traditional classifiers due to the
Fundamental Problem of Causal Inference mentioned in
Section~\ref{sec:intro}.  For a given individual we only know the
outcome after treatment or without treatment, never both.  As a
result, we never know whether the action taken on a given individual
has been truly beneficial.  Therefore, we cannot assess model
performance at the level of single objects, this is possible only for
groups of objects.  Usually such groups are compared based on an
assumption that objects which are assigned similar scores by an uplift
model are indeed similar and can be compared with each other.

To assess performance of uplift models we will use so called {\em
  uplift curves}~\cite{RadcliffeTechRep,my:uplift-trees}.  Notice
that since building uplift models requires two training sets, we now
also have two test sets: treatment and control.  Recall that one of
the tools for assessing performance of standard classifiers are lift
curves\footnote{Also known as cumulative gains curves or cumulative
  accuracy profiles.}, where the $x$ axis corresponds to the number of
cases subjected to an action and the $y$ axis to the number of
successes captured.  In order to obtain an {\em uplift curve} we score
both test sets using the uplift model and subtract the lift curve
generated on the control test set from the lift curve generated on the
treatment test set.  The number of successes for both curves is
expressed as percentage of the total population such that the
subtraction is meaningful.

The interpretation of an uplift curve is as follows: on the $x$ axis
we select the percentage of the population on which the action is
performed and on the $y$ axis we read the net gain in success
probability (with respect to taking no action) achieved on the
targeted group.  The point at $x=100\%$ gives the gain that would have
been obtained if the action was applied to the whole population.  The
diagonal corresponds to targeting a randomly selected subset.  Example
curves are shown in Section~\ref{sec:results}, more details can be
found in~\cite{my:uplift-trees,RadcliffeTechRep}.

As with ROC curves, we can use the Area Under the Uplift Curve (AUUC)
to summarize model performance with a single number.  In this paper we
subtract the area under the diagonal from this value to obtain more
meaningful numbers.  Note that AUUC can be less than zero; this
happens when the model gives high scores to cases for which the action
has a predominantly negative effect.

The experiments have been performed by randomly splitting both
treatment and control datasets into train (80\%) and test (20\%)
parts.  The process has been repeated 256 times and the results
averaged to improve repeatability of the experiments.

\subsection{Experiments}\label{sec:results}

We will now present the experimental results.  As base models we are
going to use two types of unpruned E-divergence based uplift trees:
stumps (trees of height one) and trees of height three with eight
leaves (assuming binary splits).  See Section~\ref{sec:intro}
and~\cite{my:uplift-trees} for more details.

Recall from Section~\ref{sec:general-uplift-boosting} that our
boosting algorithms are discrete, that is ensemble members return 0-1
predictions\footnote{For this reason results for bagging are not
  comparable with those in~\cite{my:uplEnsembles}.}, not numerical
scores, however, bagging and boosting algorithms themselves do produce
numerical scores.  Since the base models can, in principle, also
return numerical scores (an estimate of the difference in success
probabilities between treatment and control),
Figures~\ref{fig:auucs-stump} and~\ref{fig:auucs-e8} include AUUCs for
base models in both 0-1 and score modalities.

We begin the experimental evaluation by showing some example uplift
curves for the three proposed uplift boosting algorithms.
Figure~\ref{fig:veteran} shows the curves for boosted E-divergence
based stumps on the {\tt veteran} dataset and
Figure~\ref{fig:colon-death} the curves for boosted three level
E-divergence based uplift trees on the {\tt colon death} dataset.

\begin{figure}[t]
  \begin{center}
    {\footnotesize
    \begin{tabular}{ccc}
      ~~uplift AdaBoost & ~~balanced uplift boosting & ~~balanced forgetting boosting\\
      \includegraphics[width=0.3\textwidth]{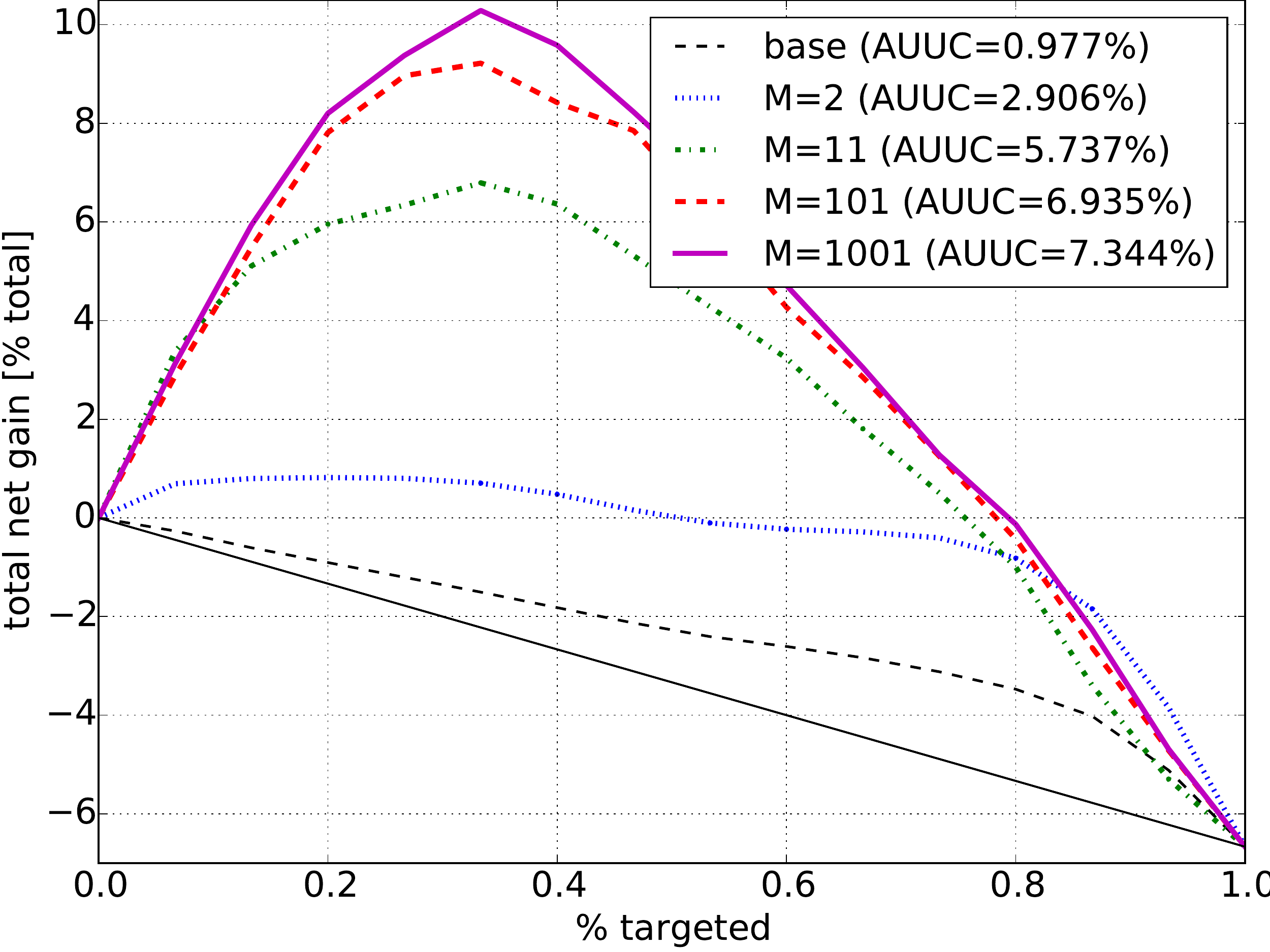}&
      \includegraphics[width=0.3\textwidth]{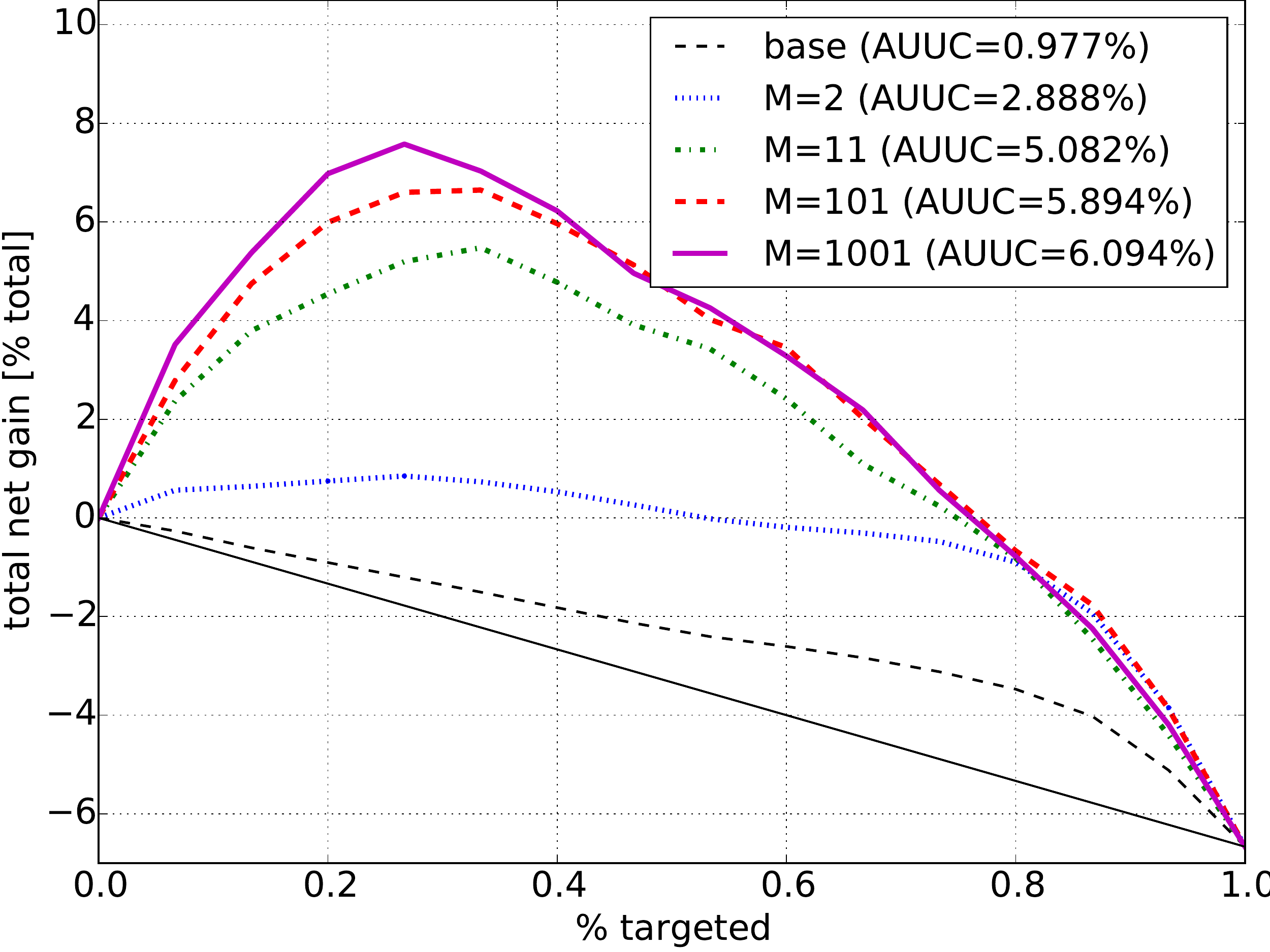}&
      \includegraphics[width=0.3\textwidth]{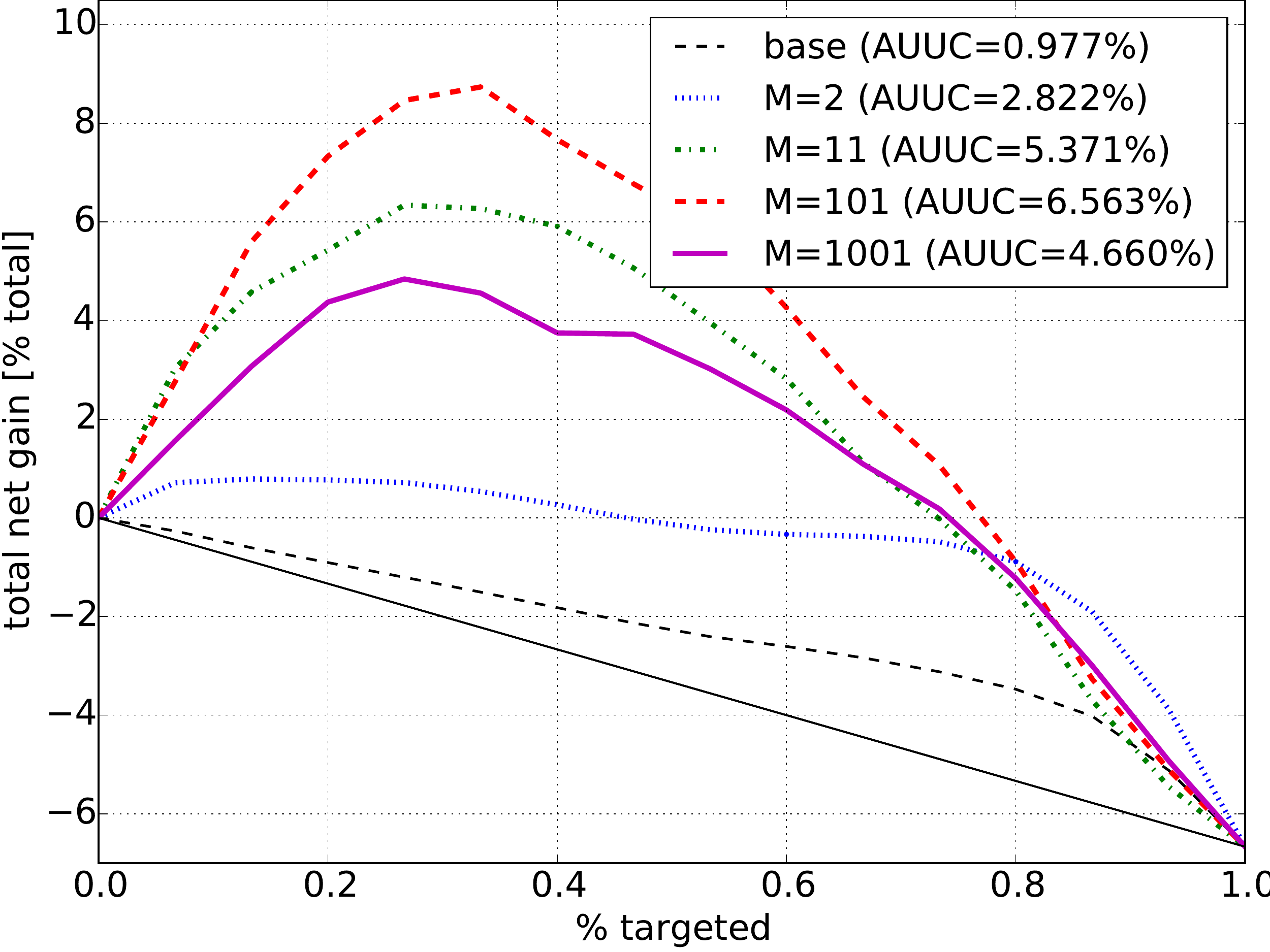}
    \end{tabular}
    }
  \end{center}
  \caption{Uplift curves for three uplift boosting algorithms with
    growing ensemble size for the {\tt veteran} dataset with
    E-divergence stump used as base model.}\label{fig:veteran}
\end{figure}
\begin{figure}[t]
\begin{center}
    {\footnotesize
    \begin{tabular}{ccc}
      ~~uplift AdaBoost & ~~balanced uplift boosting & ~~balanced forgetting boosting\\
  \includegraphics[width=0.3\textwidth]{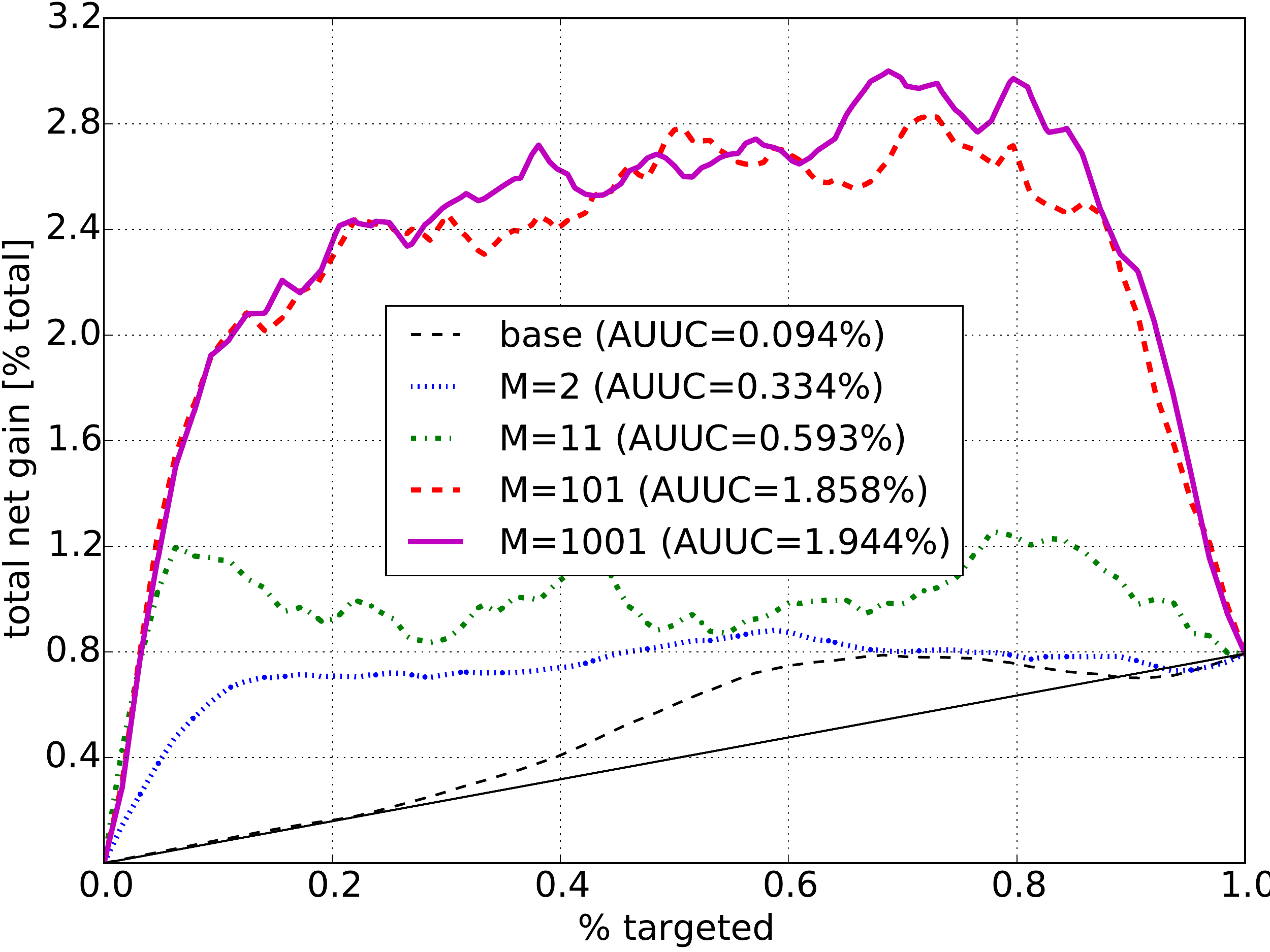}&
  \includegraphics[width=0.3\textwidth]{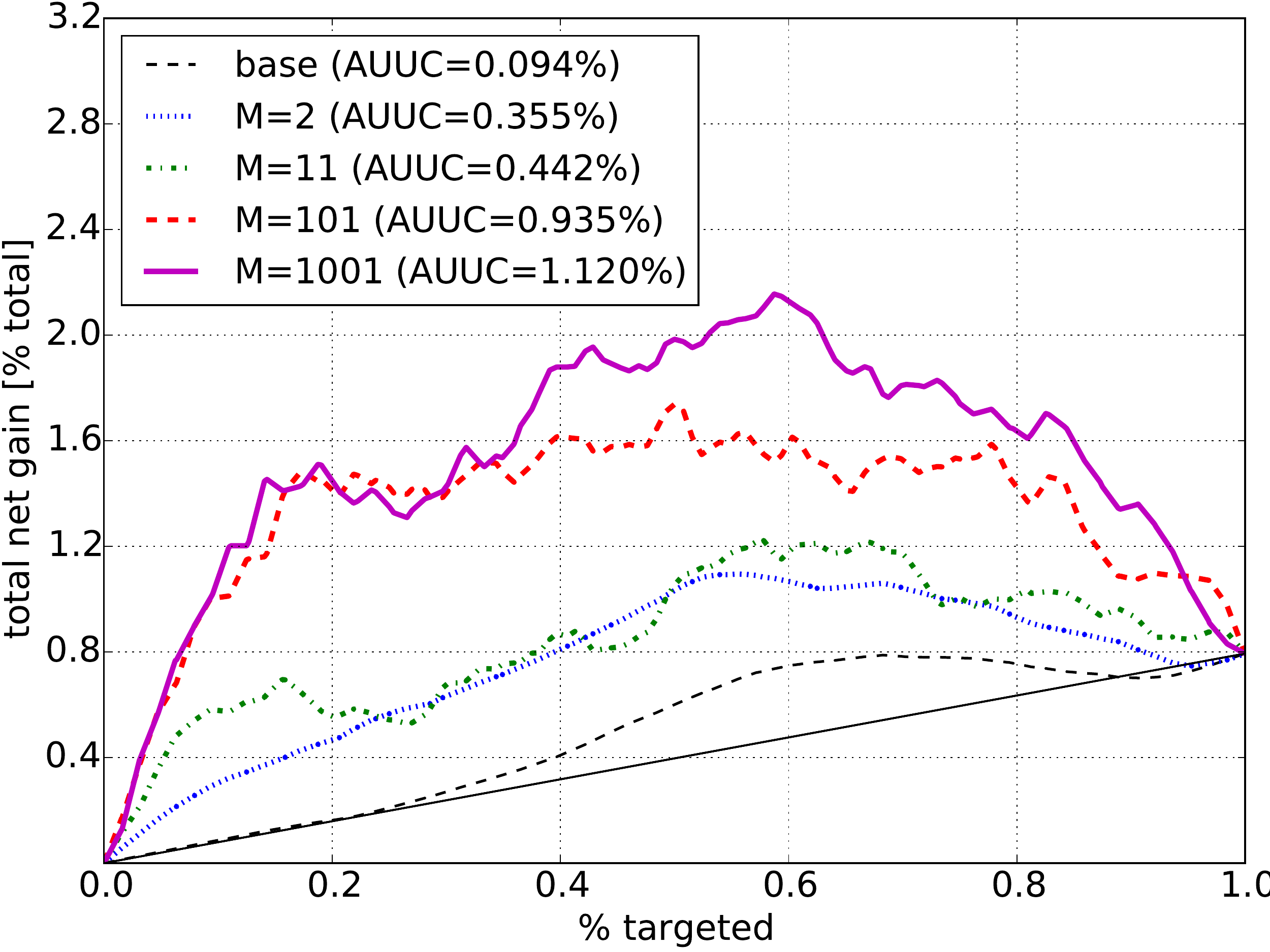}&
  \includegraphics[width=0.3\textwidth]{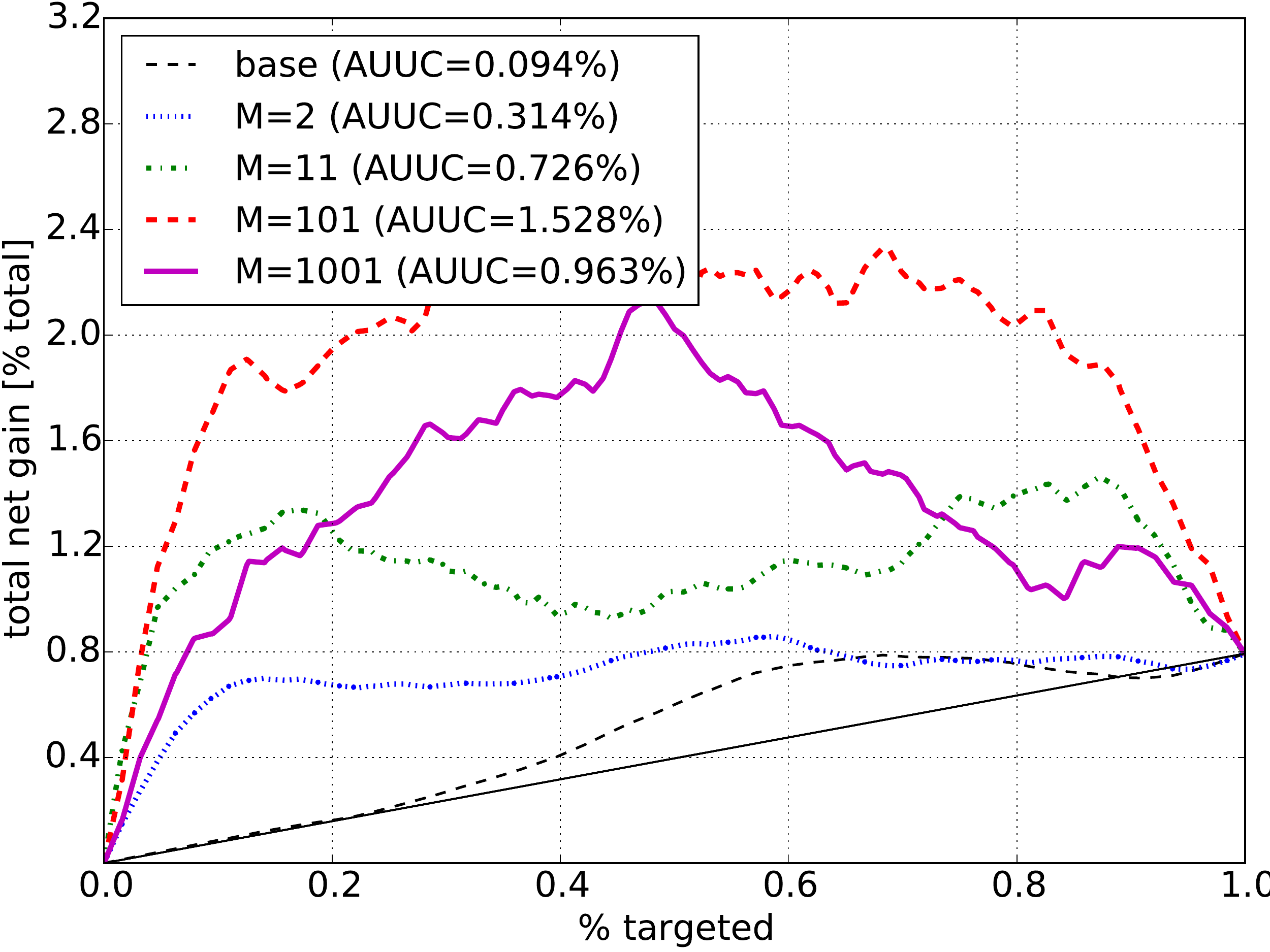}
    \end{tabular}
}
\end{center}
  \caption{Uplift curves for three uplift boosting algorithms with
    growing ensemble size for the {\tt colon death} dataset with three
    level E-divergence uplift tree used as base model.}
\label{fig:colon-death}
\end{figure}

Clearly, boosting dramatically improves performance in both cases.
For example, for the {\tt veteran} dataset, the treatment effect is,
overall, detrimental with 6\% lower survival rate; using just the base
model fails to improve the situation.  However, if stumps based uplift
AdaBoost is used one can select about 35\% of the population for whom
the treatment is highly beneficial.  The overall survival rate can be
improved by over 10\% (of the total population).  One can also see
that balanced uplift boosting also resulted in steady increase of
performance with the growing ensemble size, but the overall
improvement was smaller.  For balanced forgetting boosting the
performance initially improved, but at some point started to actually
decrease.  Very similar results can be seen in
Figure~\ref{fig:colon-death}, where boosting also resulted in a
dramatic performance increase.

Before discussing those results, we will show
Figures~\ref{fig:auucs-stump} and~\ref{fig:auucs-e8} depicting AUUCs
for growing ensemble sizes for the three proposed uplift boosting
algorithms on all datasets used in our experiments.

\begin{figure}[t]
  \begin{center}
    \includegraphics[width=\textwidth]{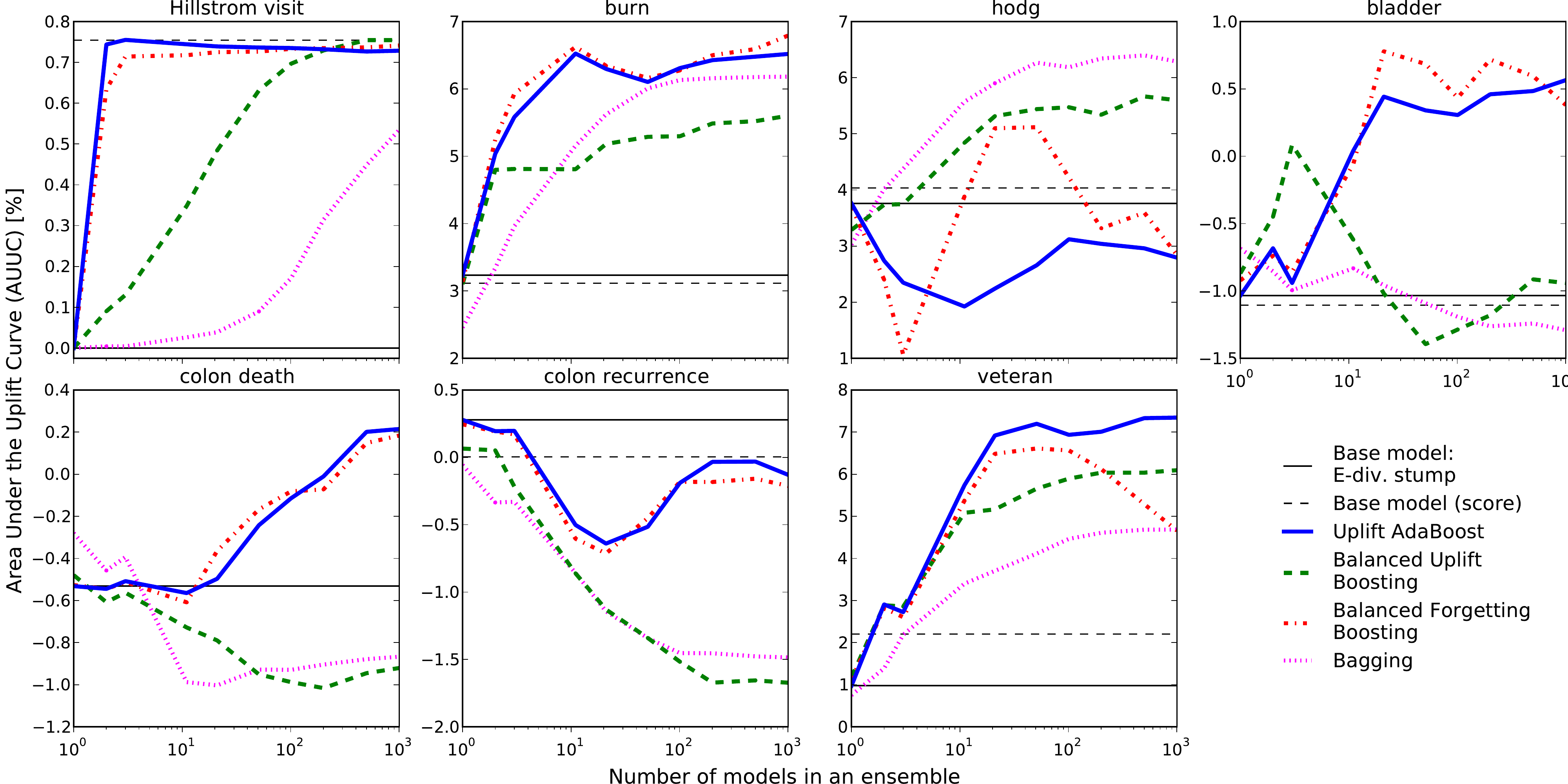}
  \end{center}
  \caption{AUUC versus ensemble size for the three proposed uplift
    boosting algorithms with E-divergence stumps used as base models.}
  \label{fig:auucs-stump}
\end{figure}

\begin{figure}[t]
  \begin{center}
    \includegraphics[width=\textwidth]{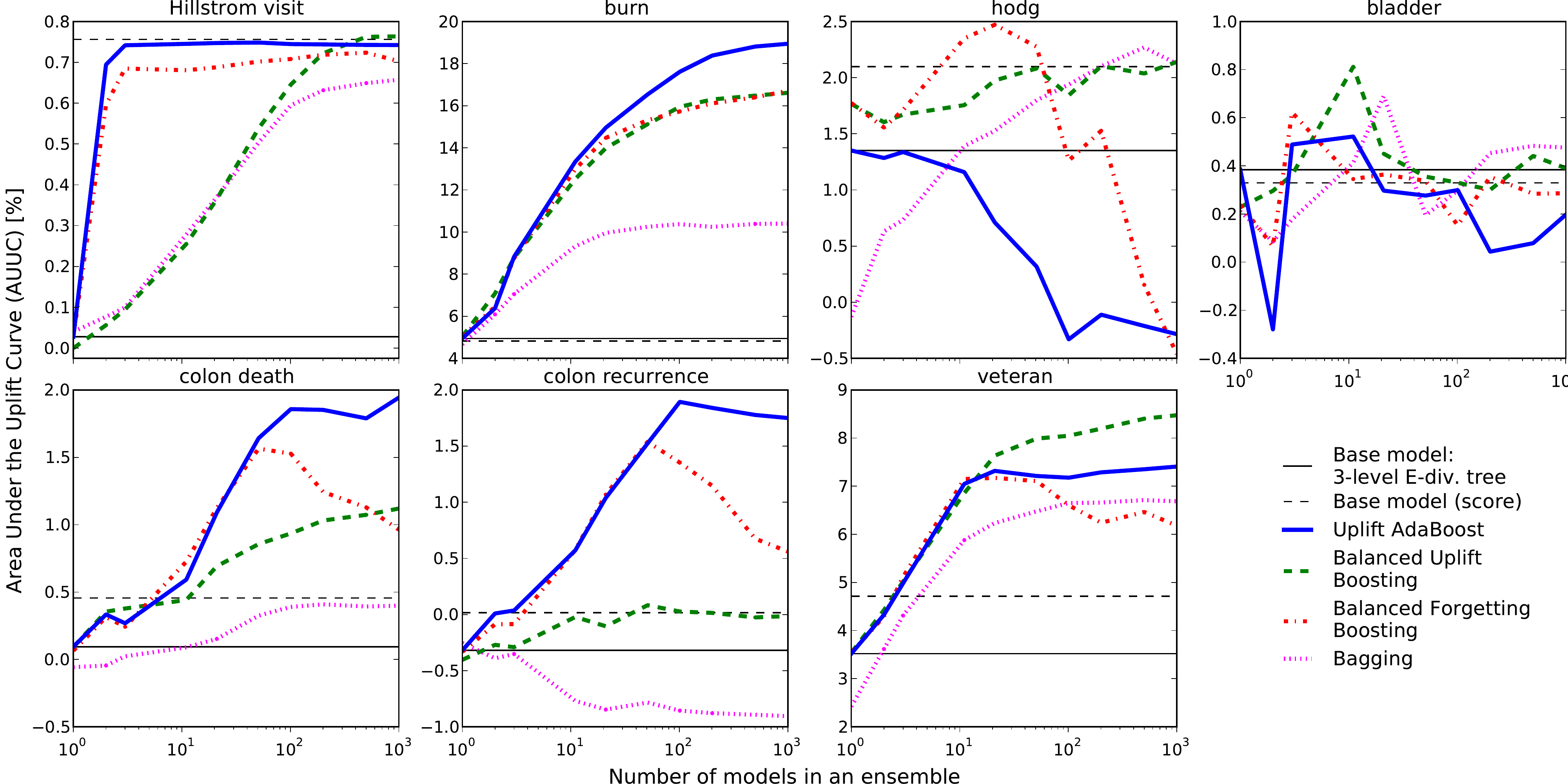}
  \end{center}
  \caption{AUUC versus ensemble size for the three proposed uplift
    boosting algorithms with three level E-divergence trees used as
    base models.}
  \label{fig:auucs-e8}
\end{figure}

The figures clearly demonstrate usefulness of boosting which usually
outperforms the base models and bagging, often dramatically. Two exceptions are the {\tt hodg} dataset and the {\tt bladder} dataset with
three level trees used as base learners for which bagging has an
advantage over all variants of boosting.  On those datasets as well as
on {\tt Hillstrom visit} and {\tt colon recurrence} with stumps
boosting did not improve over the base models.  On the {\tt Hillstrom
  visit} dataset boosting did dramatically improve on the base models'
0-1 predictions but not on its scores, which deserves a comment.  The
mailing campaign was overall effective so the base model's 0-1
predictions are always 1 giving a diagonal uplift curve and zero AUUC
(recall that we subtract the area under the diagonal).  There seems to
be only one predictive attribute in the data so stumps produced
excellent scores which were matched by boosted 0-1 stumps.  In this
sense boosting probably did find an optimal model in this case.

Interesting cases are the {\tt bladder} and {\tt colon death}
datasets with uplift stumps used as base models and {\tt colon recurrence} with
three level uplift trees, where uplift AdaBoost and balanced forgetting
boosting were able to achieve good performance even though the base
model has negative AUUC. This was not the case for bagging and
balanced uplift boosting.

Let us now compare the three proposed uplift boosting algorithms.  No
single algorithm produces the best results in all cases.  It can
however be seen that typically uplift AdaBoost and balanced
forgetting boosting outperform balanced uplift boosting.  One can
conclude that the property of forgetting the last member added to the
ensemble is important, probably because it leads to higher ensemble
diversity.

Unfortunately, the balanced forgetting boosting, which performs well
for smaller ensembles often begins deteriorating when the ensemble
grows too large.  We conjecture that the reason is that the bound on
the uplift analogue of classification error
(Equation~\ref{eq:ub-err-bound}) is not guaranteed to decrease (see
Section~\ref{sec:FBW}).  The performance of the two other uplift
boosting algorithms is much more stable in this respect.

Balance seems to be a less important condition.  However there is also
a possibility that balance is indeed important but uplift AdaBoost
does, in practice, maintain approximate balance between treatment and
control groups.  To investigate this issue we computed the value
$$\max_{m=1,\ldots,101}\max\left\{\frac{p^T_m}{p^C_m},\frac{p^C_m}{p^T_m}\right\}$$
for uplift AdaBoost for each dataset and base model.  The value gives
us the largest relative difference between the weight of treatment and
control groups over the first 101 iterations.  The largest values were
obtained for the {\tt hodg} dataset with stumps ($6.56$) and {\tt
  burn} ($24.50$), {\tt hodg} ($10.93$), {\tt veteran} ($7.28$) with
three level uplift trees; for all other datasets and base models the
value was below $2.3$.  One can see in Figures~\ref{fig:auucs-stump}
and~\ref{fig:auucs-e8} that (except for {\tt burn}) those are the
datasets for which balanced uplift boosting outperformed uplift
AdaBoost.  Balanced forgetting boosting also performed well on those
datasets, at least until its performance started to degrade with
larger ensemble sizes.

We thus conclude that balance is in fact important, however in many
practical cases the boosting process does not violate it too much and
countermeasures are not necessary.  There are, however, situations where
this is not the case and uplift boosting which explicitly maintains
balance is preferred.

\section{Conclusions}

In this paper we have developed three boosting algorithms for the
uplift modeling problem.  We began by formulating three properties
which uplift boosting algorithms should have. Since all three
cannot be simultaneously satisfied, we designed three algorithms, each
satisfying two of the properties.  The properties of the proposed
algorithms are briefly summarized in Table~\ref{tab:alg-prop}.

\begin{table}
  \caption{Properties of the proposed uplift boosting algorithms.}\label{tab:alg-prop}
  \begin{center}
    \begin{tabular}{|l|c|c|c|}
      \hline
          & nonincreasing &         & forgetting  \\
Algorithm & training error bound        & balance & the last member \\\hline
uplift AdaBoost & Yes & No & Yes\\
balanced uplift boosting & Yes & Yes & No\\
balanced forgetting boosting & No & Yes & Yes\\
\hline
    \end{tabular}
  \end{center}
\end{table}

Experimental evaluation showed that boosting has the potential to
dramatically improve the performance of uplift models and typically
performs significantly better than bagging.  Of the three algorithms
uplift AdaBoost usually (but not always) performed best.  This is most
probably due to the forgetting and training error reduction properties
it satisfies.  A more thorough analysis revealed that the balance
condition, which uplift AdaBoost does not satisfy explicitly, is often
satisfied approximately; when this is not the case, other uplift
boosting algorithms, especially balanced uplift boosting, are more
appropriate.  The balanced forgetting boosting which satisfies the
balance property and also forgets the last ensemble member does not
guarantee the decrease of training error bound.  As a result, the
algorithm performs well for small ensembles but diverges as more
members are added.

\appendix
\section*{Appendix A.}
\label{app:thm:p_equal_q}

\begin{proof}(of Theorem~\ref{thm:forgetting-interp})
Note that the assumption of random group assignment implies
$P^T(h=1)=P^C(h=1)=P(h=1)$ since both groups are scored with the same
model and have the same distributions of predictor variables.  Using
the balance condition, the error $\epsilon$ of $h$, defined
in Equation~\ref{eq:err-upl}, can be expressed as (the second equality follows from $p^T=p^C=\frac{1}{2}$)
\begin{align*}
2\epsilon & = 2P^T(h=1-y)p^T + 2P^C(h=y)p^C = P^T(h=1-y) + P^C(h=y) \\
& = P^T(h=1, y=0) + P^T(h=0, y=1) + P^C(h=y=0) + P^C(h=y=1)\\
& = P^T(y=0|h=1)P^T(h=1) + P^T(y=1|h=0)P^T(h=0)\\ &\qquad\qquad + P^C(y=1|h=1)P^C(h=1) + P^C(y=0|h=0)P^C(h=0). \\
\intertext{Using the assumption of random treatment assignment and rearranging:}
& = P(h=1)\left[P^T(y=0|h=1) + P^C(y=1|h=1)\right]\\ &\qquad\qquad + P(h=0)\left[P^T(y=1|h=0) + P^C(y=0|h=0)\right] \\
& = P(h=1)\left[1-P^T(y=1|h=1) + P^C(y=1|h=1)\right]\\ &\qquad\qquad + P(h=0)\left[P^T(y=1|h=0) + 1-P^C(y=1|h=0)\right] \\
& = 1 + P(h=1) \left[-(P^T(y=1|h=1) - P^C(y=1|h=1))\right]\\ &\qquad\qquad + P(h=0)\left[P^T(y=1|h=0)-P^C(y=1|h=0)\right].
\end{align*}
After taking $\epsilon = \frac{1}{2}$ the result follows.
\end{proof}

\appendix
\section*{Appendix B.}
\label{app:thm:bub_solution}

Here we provide a proof of Theorem~\ref{thm:bub_solution}. First we
introduce a lemma establishing relations between $\beta_m^C$ and
$\beta_m^T$:

\begin{lemma}\label{lem:relations_bmt_bmc}
The following equivalence holds:
\begin{equation}\label{eq:beta-gt-beta}
\beta_m^C \geq \beta_m^T \Leftrightarrow \left( \epsilon_m^C \geq \epsilon_m^T \land \beta_m^C \in [1,+\infty) \right) \lor \left( \epsilon_m^C \leq \epsilon_m^T \land \beta_m^C \in (0,1] \right).
\end{equation}
\end{lemma}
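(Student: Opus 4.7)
The plan is to rewrite the inequality $\beta_m^C \geq \beta_m^T$ using Equation~\ref{eq:bub_notation}, which already expresses $\beta_m^T$ as an affine function $a_m\beta_m^C + b_m$ of $\beta_m^C$, and then exploit a small algebraic identity to reduce everything to a single sign check.

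First I would substitute to obtain $\beta_m^C \geq a_m\beta_m^C + b_m$, equivalently $(1-a_m)\beta_m^C \geq b_m$. A short calculation from the definitions
\[
a_m = \frac{1-\epsilon_m^C}{1-\epsilon_m^T}, \qquad b_m = \frac{\epsilon_m^C - \epsilon_m^T}{1-\epsilon_m^T}
\]
gives the identity $1 - a_m = b_m$, so the inequality collapses to
\[
b_m(\beta_m^C - 1) \geq 0.
\]

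From here the equivalence is a product-of-signs statement. Because the restart rule in step~\ref{step:restart-weights} of Algorithm~\ref{alg:upl-boost} guarantees $\epsilon_m^T, \epsilon_m^C \in (0, \tfrac{1}{2})$ whenever the update is actually applied, the denominator $1 - \epsilon_m^T$ is positive and the sign of $b_m$ matches that of $\epsilon_m^C - \epsilon_m^T$. A three-way case split then yields the lemma: $\epsilon_m^C > \epsilon_m^T$ makes $b_m > 0$ and forces $\beta_m^C \geq 1$; $\epsilon_m^C < \epsilon_m^T$ makes $b_m < 0$ and forces $\beta_m^C \leq 1$; and the boundary $\epsilon_m^C = \epsilon_m^T$ gives $b_m = 0$, in which case $\beta_m^T = \beta_m^C$ and both disjuncts on the right of Equation~\ref{eq:beta-gt-beta} hold trivially. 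The converse direction uses the same chain of equivalences read backwards.

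There is essentially no hard step. The only thing to watch is that we stay in the regime $\beta_m^C \in (0,+\infty)$ so that no spurious sign issues enter when manipulating the inequality, and that $1-\epsilon_m^T > 0$, both of which are guaranteed by the algorithm. The entire content of the lemma is the observation $1 - a_m = b_m$; once that is in hand, the equivalence is immediate.
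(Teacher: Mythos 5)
Your proof is correct and follows essentially the same route as the paper's: both reduce $\beta_m^C \geq \beta_m^T$ to $(1-a_m)\beta_m^C \geq b_m$ and rely on the identity $1-a_m=b_m$ (the paper phrases it as $b_m/(1-a_m)=1$), the only difference being that you package the three-way case split as the single sign condition $b_m(\beta_m^C-1)\geq 0$. One small caveat: the needed fact $1-\epsilon_m^T>0$ should be justified by the paper's standing assumption that both errors lie strictly in $(0,1)$ rather than by the restart rule confining them to $(0,\tfrac{1}{2})$, since Theorem~\ref{thm:bub_solution}, which uses this lemma, explicitly analyzes cases with errors above $\tfrac{1}{2}$.
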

\begin{proof}
From Equation~\ref{eq:bub_notation} we conclude that $\beta_m^C \geq
\beta_m^T$ is equivalent to $(1 - a_m) \beta_m^C \geq b_m$.  Also note
that $b_m/(1-a_m)=1$. 
Clearly
\[
\beta_m^C \geq \beta_m^T\Leftrightarrow
 \underbrace{(\beta_m^C \geq \beta_m^T\wedge \epsilon_m^C > \epsilon_m^T)}_{(a)}
 \vee \underbrace{(\beta_m^C \geq \beta_m^T\wedge \epsilon_m^C < \epsilon_m^T)}_{(b)}
  \vee \underbrace{(\beta_m^C \geq \beta_m^T\wedge \epsilon_m^C = \epsilon_m^T)}_{(c)}.
\]
Condition (a) is equivalent to $(1 - a_m) \beta_m^C \geq
b_m\wedge\epsilon_m^C > \epsilon_m^T\wedge 1 - a_m > 0$ which in turn
is equivalent to $\beta_m^C \geq 1\wedge\epsilon_m^C > \epsilon_m^T$.
Similarly (b) is equivalent to $\beta_m^C \leq 1\wedge\epsilon_m^C <
\epsilon_m^T$.  For (c), notice that $e_m^C = e_m^T \Leftrightarrow
\beta_m^C = \beta_m^T$ (Equation~\ref{eq:bub_notation}) so the right
hand side of Equation~\ref{eq:beta-gt-beta} becomes $\beta_m^C \in
(0,+\infty)\wedge e_m^C = e_m^T$, which is trivially true by nonnegativity of $\beta_m^C$.
The result follows after taking the conjunction
\[
(\beta_m^C \geq 1\wedge\epsilon_m^C > \epsilon_m^T)\vee(\beta_m^C \leq
1\wedge\epsilon_m^C < \epsilon_m^T)\vee(\beta_m^C \in
(0,+\infty)\wedge e_m^C = e_m^T).
\]
\end{proof}
It is easy to see that the complementary condition is:
\begin{equation}
\beta_m^C < \beta_m^T \Leftrightarrow \left( \epsilon_m^C < \epsilon_m^T \land \beta_m^C \in (1,+\infty) \right) \lor \left( \epsilon_m^C > \epsilon_m^T \land \beta_m^C \in (0,1) \right).
\end{equation}

\begin{proof}(of Theorem~\ref{thm:bub_solution})
Consider two cases: $\beta_m^C < \beta_m^T$ and $\beta_m^C \geq
\beta_m^T$.  Assume first, that $\beta_m^C < \beta_m^T$.  We have
$$f\left( \beta_m^C \right) = f^{C<T}\left( \beta_m^C \right) = \frac{ 1 - (1 - \epsilon_m^C)(1 - \beta_m^C)}{\sqrt{\beta_m^C}}.$$
Obviously $f^{C<T}\left( 1 \right) = 1$. Now we take the derivative:
$$\frac{df^{C<T}}{d\beta_m^C} = \frac{(1 -
  \epsilon_m^C)\sqrt{\beta_m^C} -
  \frac{1}{2\sqrt{\beta_m^C}}(\epsilon_m^C + \beta_m^C - \epsilon_m^C
  \beta_m^C)}{\beta_m^C},$$ and by equating to zero find a local
minimum at
\[\beta_m^C = \frac{\epsilon_m^C}{1 - \epsilon_m^C},\qquad \beta_m^T =  \frac{2\epsilon_m^C - \epsilon_m^T}{1 - \epsilon_m^T}.
\]
Since the numerator of the derivative is a linear function of
$\beta_m^C$, this is also a global minimum of $f^{C<T}$.

Consider now the second case: $\beta_m^C \geq \beta_m^T$.  Analogously we have
$$f^{C\geq T}\left( \beta_m^C \right) = \frac{ 1 - (1 - \epsilon_m^C)(1 - \beta_m^C)}{\sqrt{a_m\beta_m^C + b_m}},$$
with $f^{C\geq T}\left( 1 \right) = 1$ and a global minimum at
\[
\beta_m^C = \frac{2\epsilon_m^T - \epsilon_m^C}{1 - \epsilon_m^C},
\qquad \beta_m^T = \frac{\epsilon_m^T}{1 - \epsilon_m^T},
\]
provided that $\sqrt{a_m\beta_m^C + b_m} > 0$. This, however, is always true for the optimal $\beta_m^C$.
%
Hence, the objective function can be defined as:
$$f(\beta_m^C)=\left\{
\begin{array}{ll}
f^{C\geq T}(\beta_m^C)& if \; \beta_m^C \geq \beta_m^T, \\
f^{C < T}(\beta_m^C)& if \; \beta_m^C < \beta_m^T, \\
1& if \; \beta_m^C = 1.
\end{array}
\right.$$
We assume strict inequalities between errors and
$\frac{1}{2}$, otherwise the optimum is at $\beta_m^C=\beta_m^T=1$
with $f(1)=1$.
To minimize this function we are going to consider several
cases:




\begin{enumerate}
\item $\epsilon_m^C \leq \epsilon_m^T < \frac{1}{2}$.
If $\beta_m^C \in (0,1]$ we have $\beta_m^C \geq \beta_m^T$ (from
  Lemma~\ref{lem:relations_bmt_bmc}) so $f(\beta_m^C) = f^{C\geq
    T}(\beta_m^C)$ and the optimum is at $\beta_m^{C*} =
  \frac{2\epsilon_m^T-\epsilon_m^C}{1-\epsilon_m^C}$. Since
  $\epsilon_m^C < 2\epsilon_m^T < 1$ we have $0 < \beta_m^{C*} < 1$
  and there is a local minimum of $f$ in $\beta_m^{C*}$.

If $\beta_m^C \in (1,+\infty)$ we have $\beta_m^C < \beta_m^T$ (from
Lemma~\ref{lem:relations_bmt_bmc}) so $f(\beta_m^C) =
f^{C<T}(\beta_m^C)$. Since the optimum of $f^{C<T}$ is in
$\frac{\epsilon_m^C}{1-\epsilon_m^C} < 1$, $f$ is increasing on
$(1,+\infty)$ with minimum at $f(1)=1$.

Thus we have only one minimum of the objective function $f$ in the
whole domain, which is in $\beta_m^{C*}$. The corresponding upper
bound is $f(\beta_m^{C*}) = 2\sqrt{\epsilon_m^T(1-\epsilon_m^T)}$.

\item $\frac{1}{2} < \epsilon_m^C < \epsilon_m^T$. All derivations are
  analogous to the previous case, but now the functions: $f^{C\geq
    T}$, $f^{C<T}$ have their minima taken over $(1,+\infty)$. Hence,
  only $\beta_m^{C*} = \frac{\epsilon_m^C}{1-\epsilon_m^C} > 1$ is a
  valid minimum of $f$ on $(0,+\infty)$. The upper bound is now:
  $f(\beta_m^{C*}) = 2\sqrt{\epsilon_m^C(1-\epsilon_m^C)}$.

\item $\epsilon_m^T < \epsilon_m^C < \frac{1}{2}$.  The proof is
  similar to cases 1 and 2, minimum equal to
  $2\sqrt{\epsilon_m^C(1-\epsilon_m^C)}$.


\item $\frac{1}{2} < \epsilon_m^T < \epsilon_m^C$. The proof is similar to cases 1 and 2, minimum equal to
  $2\sqrt{\epsilon_m^T(1-\epsilon_m^T)}$.


\item $\epsilon_m^C < \frac{1}{2} < \epsilon_m^T$.  Now $f(\beta_m^C)
  = f^{C\geq T}(\beta_m^C)$ for $\beta_m^C \in (0,1]$ and
    $f(\beta_m^C) = f^{C<T}(\beta_m^C)$ for $\beta_m^C \in
    (1,+\infty)$. Yet, the optimum for the first case is in
    $\frac{2\epsilon_m^T-\epsilon_m^C}{1-\epsilon_m^C} > 1$ and for
    the second in $\frac{\epsilon_m^C}{1-\epsilon_m^C} < 1$. Since
    both optima are outside the ranges implied by
    Lemma~\ref{lem:relations_bmt_bmc}, the minimum is at $f(1)=1$.

\item $\epsilon_m^T < \frac{1}{2} < \epsilon_m^C$.  Proof analogous to case 5.


\end{enumerate}
\end{proof}

\bibliographystyle{plain}      
\bibliography{boosting}

\end{document}